\documentclass[twoside]{article}

\usepackage[accepted]{aistats2020}

\setlength{\pdfpageheight}{11in}
\setlength{\pdfpagewidth}{8.5in}

\usepackage[numbers]{natbib}

\usepackage{lipsum}

\usepackage[utf8]{inputenc} % allow utf-8 input
\usepackage[T1]{fontenc}    % use 8-bit T1 fonts
\usepackage{lmodern}
\usepackage{hyperref}       % hyperlinks
\usepackage{url}            % simple URL typesetting
\usepackage{booktabs}       % professional-quality tables
\usepackage{amsfonts}       % blackboard math symbols
\usepackage{nicefrac}       % compact symbols for 1/2, etc.
\usepackage{microtype}      % microtypography

\usepackage[linesnumbered, ruled, vlined, noend]{algorithm2e}
\usepackage{amsmath}
\usepackage{amsthm}
\usepackage{amssymb}

\usepackage{bm}
\usepackage[title]{appendix}
\usepackage{courier}
\usepackage{enumitem}
\usepackage{graphicx}
\usepackage{algorithmic}
\usepackage[caption=false,font=footnotesize]{subfig}
\usepackage[margin=1in]{geometry}
\usepackage{authblk}
\usepackage{nicefrac}
\usepackage{mathtools}
\usepackage{color}
\usepackage{xspace} % Correct macro spacing
\usepackage{times}
\usepackage{hyperref}
\usepackage{stmaryrd}

\newcommand{\llangle}{\langle\!\langle}
\newcommand{\rrangle}{\rangle\!\rangle}

%% Macros to make visible comments while writing the paper
\usepackage{todonotes}

 % cite info

\newcommand{\fTT}[1]{f_{\mathrm{TT}(#1)}}
\newcommand{\fCP}[1]{f_{\mathrm{CP}(#1)}}

\newtheorem*{theorem*}{Theorem}
\newtheorem*{lemma*}{Lemma}

% 2014 written by S V N Vishwanathan (vishy@ucsc.edu)
% 2015 modified by Jiasen Yang (jiaseny@purdue.edu)

\RequirePackage{latexsym}
\RequirePackage{amsmath}
\RequirePackage{amssymb}
\RequirePackage{bm}
\RequirePackage[mathscr]{euscript}

\newcommand{\Ex}[1]{\mbox{}\mathbb{E}\left[#1\right]}
\newcommand{\Var}[1]{\mbox{}\textup{Var}\left(#1\right)}

\newcommand{\Tr}[1]{\mbox{}\textup{Tr}\left[#1\right]}

\newcommand{\ts}{\mathsf{T}}
% \newcommand{\ts}{\top}
% \newcommand{\ind}[1]{\mathbb{I}\{#1\}}

 % pseudoinverse
 % pseudoinverse
 % pseudoinverse
\newcommand{\RR}[2]{\mathbb{R}^{#1 \times #2}} % Real numbers
 % Real numbers
 % pseudoinverse

% \newcommand{\var}{\text{var}}

%\newcommand{\todo}[1]{\textcolor{red}{\texttt{TODO}: \emph{#1}}}

\newcommand{\vectorize}{\mathrm{vec}}
\newcommand{\bigo}{\mathcal{O}}

%%%%%%%%%%%Tensor Notation%%%%%%%%%%%%
\newcommand{\St}{\bm{\mathcal{S}}}
\newcommand{\At}{\bm{\mathcal{A}}}
\newcommand{\Bt}{\bm{\mathcal{B}}}
\newcommand{\Xt}{\bm{\mathcal{X}}}
\newcommand{\Mt}{\bm{\mathcal{M}}}
\newcommand{\Gt}{\bm{\mathcal{G}}}
\newcommand{\Tt}{\bm{\mathcal{T}}}
\newcommand{\It}{\bm{\mathcal{I}}}

\newcommand{\CP}[1]{\llbracket #1 \rrbracket}

\newcommand{\TT}[1]{\llangle #1 \rrangle}

%%%%%%%% Stock standard definitions %%%%%%%%%%%%%%%

\newcommand{\ab}{\mathbf{a}}
\newcommand{\bb}{\mathbf{b}}

\newcommand{\ub}{\mathbf{u}}
\newcommand{\vb}{\mathbf{v}}

\newcommand{\xb}{\mathbf{x}}
\newcommand{\yb}{\mathbf{y}}

\newcommand{\Ab}{\mathbf{A}}
\newcommand{\Bb}{\mathbf{B}}

\newcommand{\Ib}{\mathbf{I}}

\newcommand{\Sb}{\mathbf{S}}

\newcommand{\Xb}{\mathbf{X}}

% \newcommand{\Abtil}{\tilde{\Ab}}

%%%%%%%% Theorems and Friends %%%%%%%%%%%%%%%

%% Some style files might actually define these variables.
%% So don't mess with them if they are already defined

\ifx\BlackBox\undefined
\newcommand{\BlackBox}{\rule{1.5ex}{1.5ex}}  % end of proof
\fi

\ifx\QED\undefined
\def\QED{~\rule[-1pt]{5pt}{5pt}\par\medskip}
\fi

\ifx\proof\undefined
\newenvironment{proof}{\par\noindent{\bf Proof\ }}{\hfill\BlackBox\\[2mm]}
\fi

\ifx\theorem\undefined
\newtheorem{theorem}{Theorem}
\fi

\ifx\example\undefined

\fi

\ifx\property\undefined

\fi

\ifx\lemma\undefined
\newtheorem{lemma}[theorem]{Lemma}
\fi

\ifx\proposition\undefined

\fi

\ifx\remark\undefined

\fi

\ifx\corollary\undefined

\fi

\ifx\definition\undefined
\newtheorem{definition}{Definition}
\fi

\ifx\step\undefined

\fi

\ifx\conjecture\undefined

\fi

\ifx\axiom\undefined

\fi

\ifx\claim\undefined

\fi

\ifx\assumption\undefined

\fi

% \renewcommand{\qedsymbol}{$\blacksquare$}

%%%%%%%% Widely accepted Sets and Symbols %%%%%%%%%%%%%%%

 % Complex numbers
\newcommand{\EE}{\mathbb{E}} % Expectation
 % Fexpectation
 % Arbitrary field
 % Delta Indicator
 % Arbitrary field
 % Median
 % Natural numbers
\newcommand{\PP}{\mathbb{P}} % Probability
 % Rationals
\renewcommand{\RR}{\mathbb{R}} % Real numbers
 % Real numbers
 % Variance

 % Integers

%%%%%%%% Mathematical Operations %%%%%%%%%%%%%%%

\newcommand{\tr}{\mathop{\mathrm{tr}}}

% \newcommand{\nnz}{\mathop{\mathrm{nnz}}}

%\newcommand{\ln}{\mathop{\mathrm{ln}}}
%%%%%%%% Utility functions %%%%%%%%%%%%%%%

%%%%%%%% Short Forms %%%%%%%%%%%%%%%

\newcommand{\eg}{\emph{e.g.} }
\newcommand{\ie}{\emph{i.e.} }

%%%%%%%% Brackets %%%%%%%%%%%%%%%

\newcommand{\nbr}[1]{\left\|#1\right\|}

\newcommand{\abs}[1]{\left|#1\right|}

\newcommand{\inner}[2]{\left\langle #1,#2 \right\rangle}
\newcommand{\norm}[1]{\|#1\|}

%%%%%%%%%%%%%%%  Mathematical Constants  %%%%%%%%%%%%%%%

  % Identity
\newcommand{\zero}{\mathbf{0}} % Zero
% \newcommand{\ind}{\boldsymbol{\mathsf{I}}}

%%%%%%%% Greek Symbols %%%%%%%%%%%%%%%

%%%%%%%% Various Optimizers %%%%%%%%%%%%%%%

%
%

\usepackage{lmodern}

\usepackage[symbol]{footmisc}

\bibliographystyle{plain}
\begin{document}
%
%
%
%
%
%
%
%
%
%
%
%
%
%

%
%
%
%
%
% \aistatsauthor{Beheshteh T. Rakhshan and Guillaume Rabusseau}

\twocolumn[

\aistatstitle{Tensorized Random Projections}

\aistatsauthor{ Beheshteh T. Rakhshan \And Guillaume Rabusseau\footnotemark[1] }

\aistatsaddress{ Department of Mathematics, Purdue University \And  DIRO and Mila, Universit\'{e} de Montr\'{e}al } ]

\begin{abstract}
We introduce a novel random projection technique for efficiently reducing the dimension of very high-dimensional tensors. Building upon classical results on Gaussian random projections and Johnson-Lindenstrauss transforms~(JLT), we propose two tensorized random projection maps relying on the tensor train~(TT) and CP decomposition format, respectively. The two maps offer  very low  memory requirements and can be applied efficiently when the inputs are low rank tensors given in the CP or TT format.
Our theoretical analysis shows that the dense Gaussian matrix in JLT can be replaced by a low-rank tensor implicitly represented in compressed form with random factors, while still approximately preserving the Euclidean distance of the projected inputs. In addition, our results reveal that the TT format is substantially superior to CP in terms of the size of the random projection needed to achieve the same distortion ratio. Experiments on synthetic data validate our theoretical analysis and demonstrate the superiority of the TT decomposition.

\end{abstract}
\section{Introduction}

Random projections~(RP) are commonly used in data science and machine learning to project down high-dimensional data into a lower dimensional space while preserving most of the relevant information in the data~\citep{vempala2005random,bingham2001random}. These methods have been successfully used to trade accuracy in order to reduce time and storage complexity of classical learning algorithms such as $k$-nearest neighbors~\citep{ailon2006approximate,ailon2009fast,indyk1998approximate,kleinberg1997two}, $k$-means~\citep{boutsidis2010random}, support vector machines~\citep{paul2013random} and learning high-dimensional Gaussian mixtures~\citep{dasgupta1999learning,dasgupta2013experiments} to name a few. Most modern RP techniques build upon the celebrated \emph{Johnson-Lindenstrauss lemma}~\citep{johnson1984extensions} which shows that an arbitrary number of high-dimensional points can be linearly projected into an exponentially lower dimensional subspace while preserving distances between points. One of the simplest Johnson-Lindenstrauss transforms~(JLT) is constructed from a random matrix $\Ab$ whose entries are independently and identically drawn from a normal distribution. Fast variants of JLT have been proposed by introducing sparsity in $\Ab$~\citep{achlioptas2003database,li2006very} and by leveraging fast matrix multiplication algorithms~\citep{ailon2006approximate,ailon2009fast,ailon2013almost}.

\footnotetext{\footnotemark[1] CIFAR AI Chair}
At the same time, tensor decomposition techniques have also recently emerged as a powerful tool for dealing with high-dimensional data. Tensor methods are particularly suited to handle high-dimensional multi-modal data and have been successfully applied in neuroimaging~\citep{Zhou2013}, signal processing~\citep{Cichocki2009,sidiropoulos2017tensor}, 
 spatio-temporal analysis~\citep{bahadori2014fast} and computer vision~\citep{Lu2013}. But even when the data is not inherently multi-modal in nature, tensor decomposition techniques can be used to speed-up and scale classical learning algorithms to very high-dimensional spaces~\citep{novikov2014putting,novikov2015tensorizing}. Such algorithms exploit the ability of tensor decomposition techniques to implicitly represent very high-dimensional data in compressed form, by first tensorizing the data before applying tensor decomposition techniques. In particular, the CANDECOMP/PARAFAC~(CP)~\citep{hitchcock1927expression} and tensor-train~(TT)~\citep{oseledets2011tensor} decomposition can represent $N$th order  $d$-dimensional tensors~(or equivalently $d^N$-dimensional vectors) using only $\bigo(NdR)$ and $\bigo(NdR^2)$ parameters respectively, where the rank parameter $R$ controls the coarseness of the decomposition. Crucially, the number of parameters of these decomposition only grows \emph{linearly} with the order of the tensor $N$, which is not the case for other popular decomposition models such as the Tucker decomposition~\citep{tucker1966some}.

While efficient random projection techniques have been proposed to deal with high-dimensional data, RP still suffer from the curse of dimensionality when the input dimension is very large, which is the case for high-order tensor inputs. 
In this work, we propose to leverage tensor decomposition techniques to \emph{tensorize} Gaussian random projections. In doing so, we design efficient random projections that can be applied to any high-order tensor inputs with arbitrary rank and structure. In particular, projecting an input tensor given in the CP or TT format can be done very efficiently. More precisely, we propose two tensorized random projection maps, $\fTT{R}$ and $\fCP{R}$, relying on the TT and CP formats respectively. %

Intuitively, the random projection maps $\fTT{R}$ and $\fCP{R}$ are constructed by enforcing a low rank tensor structure~(CP or TT) on the rows of the random projection matrix $\Ab\in\RR^{k\times d^N}$ where $k\ll d^N$ is the size of the random projection and the inputs are $N$th-order $d$-dimensional tensors. The parameter $R$ corresponds to the rank of the CP/TT decomposition used to represent the rows of $\Ab$ and controls the tradeoff between the quality of the embedding and the computational and memory cost of projecting input points. More precisely, if the input $\Xt$ is given as a rank $\tilde{R}$ CP or TT tensor, computing $\fTT{R}(\Xt)$ and $\fCP{R}(\Xt)$ can be done in time $\bigo(kNd\max(R,\tilde{R})^3)$. In terms of memory requirements, $\fTT{R}$ and $\fCP{R}$ have $\bigo(kNdR^2)$ and $\bigo(kNdR)$ parameters respectively. In comparison the cost of transformation for a Gaussian JLT is in $\bigo(kd^N)$ which can be improved to $\bigo(k+Nd^N\log d)$ using fast JLT.

Our theoretical analysis shows that the key properties of Gaussian random projections are preserved after \emph{tensorization}: for any $\varepsilon>0$, with high probability, our tensorized RP embed any set of $m$ points up to multiplicative distortion $(1\pm \varepsilon)$ as soon as $k \gtrsim \varepsilon^{-2}(1+2/R)^N\mathrm{log}^{2N}m$ for $\fTT{R}$ and $k \gtrsim \varepsilon^{-2}3^{N-1}(1+2/R)\mathrm{log}^{2N}m$ for $\fCP{R}$. Besides showing that both tensorizations lead to efficient random projections (in terms of time and memory complexity), our analysis further reveals that $\fTT{R}$ is substantially superior to $\fCP{R}$ in terms of the size of the random projection needed to achieve the same multiplicative distortion. This can be seen by comparing the exponential dependency on the order $N$ of input tensors in the lower bounds on $k$ given above~(and how increasing the rank $R$ of the tensorized map can mitigate this dependency). In particular, our analysis shows that the CP format is not a reasonable decomposition format for tensorizing random projections in the case of high order input tensors.
\textbf{Summary of contributions.}
We present two tensorized random projection maps, $\fTT{R}$ and $\fCP{R}$, using the TT and CP decomposition models respectively. We show that both maps are \textit{Johnson-Lindenstrauss Transforms} offering appealing computational and memory requirements. In particular, our work is the first to design efficient RP for input tensors given in the CP or TT format.  Our theoretical analysis for $\fCP{R}$ extends the one first initiated in~\citep{sun2018tensor}~(which was limited to matrix inputs) to high-order input tensors. To the best of our knowledge, this is the first time that the TT decomposition model is leveraged to design RP that can scale to very high-dimensional inputs. Our theoretical analysis further  shows that the TT format is a better decomposition model than CP for tensorizing random projection maps. Our numerical simulations substantially validate this conclusion. It is worth mentioning that our analysis is not focused on rank-one tensors and holds for  arbitrary input tensors with low CP rank or TT rank structure.

\textbf{Related work.}
Tensor Sketch~\citep{pham2013fast} is an extension of the Count Sketch algorithm~\citep{charikar2002finding} using fast FFT which can efficiently approximate polynomial kernels. More recently, \citep{shi2019multi} extended Tensor Sketch to exploit the multi-modal structure of tensor inputs, but their approach relies on the Tucker decomposition format and cannot scale to very high-order tensors.  Kapralov et al. ~\citep{kapralov2019oblivious} also consider sketching tensor products of data points without
explicitly forming the resulting tensor, and propose an algorithm to compute a linear sketch for degree-$N$ polynomial kernels. 

More closely related to our work, Sun et al.~\citep{sun2018tensor} introduce a Tensor Random Projection map~(TRP) using a row-wise Kronecker product of random matrices. We show that their method is equivalent to the CP tensorized random projection map studied in this paper. Their theoretical analysis is limited to order 2 tensors~(i.e. matrices) and rank one projection maps:  they show that TRP satisfies the JL property when $k\gtrsim \varepsilon^{-2}\log^{8}m$ for $N=2$ and $R=1$. Our results for $\fCP{R}$ extend theirs to arbitrary values of $N$ and $R$ and provide tighter bounds even for the case of $N=2$ and $R=1$.

Lastly, Jin et al.~\citep{jin2019faster} extend the fast JLT  for embedding vectors with a Kronecker product structure. They show that the map they propose satisfy the JL property when $k \gtrsim \varepsilon^{-2}\log^{2N-1}m\log (d^N)$~(up to polylog factors) and that projecting a rank one tensor can be done in $\bigo(N d\log d + k)$.
While the upper bound we derive for $\fTT{R}$ is comparable, the choice of the rank parameter gives more flexibility to control the trade-off between accuracy and computational efficiency. In particular, computing $\fTT{R}(\Xt)$ can be considerably faster than the method proposed in~\citep{jin2019faster} when $\Xt$ is a low rank tensor given in the TT format~(see Section~\ref{section:related.work}).

\section{Preliminaries}

In this section, we introduce our notations and present the necessary background on tensor algebra, tensor decomposition and  random projections. More details can be found in~~\citep{kolda2009tensor,vempala2005random, dasgupta2003elementary}.

\subsection{Notations}
We use lower case bold letters for vectors~(\eg   $\ab,\bb$, ...), upper case bold letters for matrices~(\eg $\Ab,\Bb$, ...), and bold calligraphic letters for higher order tensors~(\eg  $\At,\Bt$, ...). 
If $\vb \in \RR^{d_1}$ and $\ub\in\RR^{d_2}$, we use $\vb \otimes \ub \in \RR^{d_1d_2}$ to denote the Kronecker product between vectors. The 2-norm of a vector $\ub$ is denoted by $\norm{\ub}_2$ or simply $\norm{\ub}$. The Khatri-Rao product is defined as the “matching column-wise” Kronecker product: if $\Ab\in\RR^{m\times R}$ and $\Bb\in\RR^{n \times R}$, it is denoted by $\Ab\odot\Bb$ and given by $ [\ab_1\otimes\bb_1\cdots \ab_R\otimes\bb_R]\in\RR^{mn\times R}$. We use the symbol "$\circ$" to denote the outer product~(or tensor product) between vectors. Given a matrix $\Sb \in \RR^{d_1\times d_2}$, we use $\vectorize(\Sb)\in \RR^{d_1.d_2}$ to denote the column vector
obtained by concatenating the columns of $\Sb$. The $d\times d$ identity matrix will be written as $\Ib_d$ and the transpose of a matrix $\Ab$ is denoted by $\Ab^\ts$. For any integer $k$ we use $[k]$ to denote the set of integers from 1 to $k$. For scalars $x,y\in\RR$, we use  $x \gtrsim  y$ to denote that $x\geq c y$ for some constant $c$. %

\subsection{Tensors}
A \textit{$N$-th order tensor} $\St\in\RR^{d_1\times \cdots \times d_N}$ can simply be seen as a multidimensional array $(\St_{i_1,\cdots,i_N}: i_n\in[d_n],n\in[N])$. The inner product between tensors is defined by $\inner\St\Tt = \sum_{i_1,\cdots,i_N}\St_{i_1,\cdots ,i_N}\Tt_{i_1,\cdots,i_N}$ for $\Tt\in\RR^{d_1\times\dots\times d_N}$ and the Frobenius norm is defined by $\nbr\St_F^2 = \inner\St\St$. If $\At\in\RR^{I_1\times\dots\times I_N}$ and $\Bt\in\RR^{J_1\times\dots\times J_N}$, we use $\At\otimes\Bt\in\RR^{I_1J_1\times\dots\times I_NJ_N}$ to denote the Kronecker product of tensors. The \textit{mode-n} fibers of $\St$ are the vectors obtained by fixing all indices except the $n$th one. The \textit{$n$-th mode matricization} of $\St$ is the matrix having the mode-$n$ fibers of $\St$ for columns\footnote{The specific ordering of the fibers does not matter as long as it is consistent across all reshaping operations.} and is denoted by $\St_{(n)}\in\RR^{d_n\times d_1\cdots d_{n-1}d_{n+1}\cdots d_N}$. The vectorization of a tensor is the vector obtained by concatenating its mode-1 fibers, i.e., $\vectorize(\St) = \vectorize(\St_{(1)})$. The notion of matricization can be extended to any subset $I\subset [N]$ of the modes of $\St$, resulting in a matrix $\St_{(I)}$ of size $\prod_{i\in I} d_i \times \prod_{j\in [N]\setminus I} d_j$. 

A rank $R$ \textit{CP decomposition} of a tensor $\St \in\RR^{d_1\times\cdots\times d_N}$ consists in factorizing $\St$  into a sum of $R$ rank one tensors: $\St = \sum_{r=1}^R  \ab^{1}_r \circ \ab^{2}_r  \circ \cdots \ab^{N}_r$ where each $\ab^{n}_r \in \RR^{d_n}$. Stacking the vectors $\ab^{n}_1,\dots,\ab^{n}_R$ into a factor matrix $\Ab^{n}\in\RR^{d_n\times R}$ for each $n\in [N]$, we will concisely denote the CP decomposition by $\St = \CP{\Ab^{1},\cdots,\Ab^{N}}$. 

A rank $R$ \textit{tensor train decomposition} of a tensor $\St\in\RR^{d_1\times\cdots\times d_N}$ consists in factorizing $\St$ into the the product of $N$  $3$rd-order core tensors $\Gt^1\in\RR^{1\times d_1\times R} , \Gt^2\in\RR^{R \times d_2 \times R},\cdots, \Gt^{N-1}\in\RR^{R\times d_{N-1}\times R},\Gt^N\in\RR^{R\times d_N\times 1}$, and is defined\footnote{The general definition of the TT-decomposition allows the rank $R$ to be different for each mode, but this definition is sufficient for the purpose of this paper.} by 
$\St_{i_1,\cdots,i_N} = (\Gt^1)_{i_1,:}( \Gt^2)_{:,i_2,:}\cdots(\Gt^{N-1})_{:,i_{N-1},:}(\Gt^N)_{:,i_N}$, for all indices $i_1\in[d_1],\cdots,i_N\in[d_N]$; we will use the notation $\St=\TT{\Gt^1,\Gt^2,\cdots,\Gt^{N-1},\Gt^N}$ to denote the TT decomposition. 
\subsection{Johnson-Lindenstrauss Transform}
A classical result of Johnson-Lindenstrauss (JL)~\citep{johnson1984extensions} states that any $m$-point set $P$ in $d$ dimension can be linearly projected to $k =\Omega(\varepsilon^{-2}\log{(m)})$ dimensions while approximately preserving the pairwise distances between the points. More precisely, there exists a map $f:\RR^d\to\RR^k (d\gg k)$ such that for all $\ub,\vb \in P$,
\begin{align*}
(1-\varepsilon)\norm{\ub-\vb}^2\leq\norm{f(\ub)-f(\vb)}^2\leq(1+\varepsilon)\norm{\ub-\vb}^2.
\end{align*}
We will call a map satisfying this property a \emph{Johnson-Lindenstrauss transform}~(JLT). One of the simplest examples of a JL transform is the so-called \emph{Gaussian random projection} map $f:\xb\mapsto \frac{1}{\sqrt{k}}\Ab \xb$ where $\Ab\in \RR^{k\times d}$ is a random matrix whose entries are independently drawn from a normal distribution. For a fixed set of input points in $\RR^d$, $f$ will satisfy the JL property with high probability. To cope with the computational cost and storage requirements of Gaussian random projections, sparse and very-sparse random projections were proposed in ~\citep{achlioptas2003database} and~\citep{li2006very} respectively. These maps leverage the fact that  the JL property is preserved even if only a small subset of the entries of $\Ab$ are normal variables while the other ones are set to $0$.

It is easy to see that in order to be a JL transform, a map $f$ must satisfy two fundamental properties: (i) it has to be an expected isometry, i.e. $\Ex{\norm{f(\xb)}^2}=\norm{\xb}^2$, and (ii) the variance of $\norm{f(\xb)}^2$ should quickly decrease to $0$ as the size of the random projection $k$ increases.

\section{Tensorized Random Projections}

As mentioned in the previous section, sparse and very-sparse Gaussian RP reduce time and memory complexity by enforcing the rows of the matrix $\Ab$ in the Gaussian RP $f:\xb\to\frac{1}{\sqrt{k}}\Ab\xb$ to be sparse. In this work, we propose to enforce a low rank tensor structure on the rows of $\Ab$ instead to obtain  better scalability w.r.t. the input dimension, which is crucial when dealing with high-order tensor inputs.  

We present two tensorized random projection maps, $\fTT{R}$ and $\fCP{R}$, relying on the TT and CP decomposition respectively.  These maps embed any tensor $\Xt\in\RR^{d_1\times\cdots\times d_N}$ into $\RR^k$, where $k \ll d_1d_2\cdots d_N$. Considering the case $d_1 = \cdots = d_N = d$ for simplicity, classical random projection maps would require $\bigo(kd^N)$ parameters~(or $\bigo(k\sqrt{d^N})$ with very sparse random projections) which is costly when $N$ is large. In contrast, $\fTT{R}$ and $\fCP{R}$ only require $\bigo(kNdR^2)$ and $\bigo(kNdR)$ parameters respectively. The two maps are constructed similarly: each component of the projection is given by the inner product between the input and a random tensor with a low rank structure~(w.r.t. either the TT or CP decomposition format). Formally, we have the following two definitions:

\begin{definition}\label{TT.definition}
A \emph{TT random projection of  rank $R$} is a linear map $\fTT{R}:\RR^{d_1\times\dots  \times d_N}\to\RR^{k}$ defined component-wise by
$$
\left(\fTT{R}(\Xt)\right)_i :=~\frac{1}{\sqrt{k}}\langle \TT{\Gt_i^1,\Gt_i^2,\cdots,\Gt_i^N},\Xt\rangle,\ i\in[k]
$$
where $\Gt_i^1\in\RR^{1\times d_1\times R} , \Gt_i^2\in\RR^{R \times d_2 \times R},\cdots, \Gt_{i}^{N-1}\in\RR^{R\times d_{N-1}\times R},\Gt_i^N\in\RR^{R\times d_N\times 1}$ for $i\in[k]$, and the entries of each $\Gt_i^n$ for $i\in[k]$, $n\in[N]$ are drawn independently from a Gaussian distribution with mean $0$ and variance $\frac{1}{\sqrt{R}}$ if $n\in \{1,N\} $ and variance $\frac{1}{R}$ if $1<n<N$.
\end{definition}

\begin{definition}\label{CP.definition}
A \emph{CP random projection of rank $R$} is a linear map $\fCP{R}:\RR^{d_1\times\dots  \times d_N}\to\RR^{k}$ defined component-wise by
$$
\left(\fCP{R}(\Xt)\right)_i :=~\frac{1}{\sqrt{k}}\langle \CP{\Ab_i^1,\Ab_i^2,\cdots,\Ab_i^N},\Xt\rangle,\ \  i\in[k]
$$
where each $\Ab_i^n\in\RR^{d_n\times R}$ for $i\in[k]$, $n\in[N]$ and the entries of each $\Ab_i^n$  are drawn independently from a Gaussian distribution with mean $0$ and variance $\left(\frac{1}{R}\right)^\frac{1}{N}$.
\end{definition}

 One can check that applying these projection maps on an input tensor given in the CP or the TT format can be done efficiently: the complexity of computing $\fTT{R}(\Xt)$ is in $\bigo(kNd\max(R,\tilde{R})^3)$ if $\Xt$ is given as a rank $\tilde{R}$ CP or TT tensor, and the complexity for $\fCP{R}(\Xt)$ is in $\bigo(kNd\max(R,\tilde{R})^2)$ if $\Xt$ is in the CP format and in $\bigo(kNd\max(R,\tilde{R})^3)$ if $\Xt$ is in the TT format~(where we assumed $d_1=\dots=d_N=d$ for simplicity).

 Before studying the properties of these tensorized random projections in the next section, we  show how $\fCP{\cdot}$ is equivalent to the tensor random projection map proposed in~ \citep{sun2018tensor}. In this work, the authors introduce the map $$f_\text{TRP}(\Xt) := \frac{1}{\sqrt{k}}(\Ab^1\odot\Ab^2\odot\cdots\odot\Ab^N)^\ts \vectorize(\Xt)\in\RR^k,$$ where each $\Ab^n\in\RR^{d_n\times k}$ for $n \in [N]$ is a random matrix whose entries are i.i.d random variables with mean zero and variance one. One can check that $f_\text{TRP}$ is strictly equivalent to $\fCP{1}$ using basic properties of the CP decomposition. Furthermore, the authors introduce a variance reduction technique with the map $f_{\text{TRP}(T)}$, a scaled average of $T$ independent TRPs, defined by $f_{\text{TRP}(T)}(\Xt) :=\frac{1}{\sqrt{T}}\sum_{t = 1}^T f_\text{TRP}^{(t)}(\Xt)$. Again, one can easily check the strict equivalence between $\fCP{R}$ and $f_{\text{TRP}(T)}$ when $R=T$.

\section{Main Results}
In this section, we present our main results showing that the tensorized projection $\fTT{R}$ and $\fCP{R}$ still benefits from the fundamental properties of Gaussian random projections: they are expected isometry and the variance of the norm of the projections decreases to $0$ as the embedding dimension $k$ grows. These results imply that, in addition to be particularly efficient in terms of storage requirement and computational cost, these maps are JL transforms: they approximately preserve Euclidean distances between projected points. Moreover, our analysis will show that there is a crucial difference between the two tensorized random projections: as the order of the input tensor $\Xt$ grows, the embedding dimension of $\fCP{R}$ needs to grow exponentially in comparison to the one of $\fTT{R}$ in order to achieve the same distortion ratio $\varepsilon$.
Our results rely on the following theorem which shows that both maps are expected isometries and gives bounds on the variance of the two projections. 

\begin{theorem}\label{variance}
Let $\Xt\in\RR^{d_1\times d_2 \times \cdots \times d_N}$. The random projection maps $\fTT{R}$ and $\fCP{R}$~(see Definitions~\ref{TT.definition} and~\ref{CP.definition}) satisfy the following properties:
\begin{itemize}[leftmargin=*]
    \item $\Ex{\norm{\fCP{R}(\Xt)}_2^2} = \Ex{\norm{\fTT{R}(\Xt)}_2^2} = \norm{\Xt}_F^2$
    \item $\Var{\norm{\fTT{R}(\Xt)}_2^2}\leq\frac{1}{k}(3\left(1+\frac{2}{R}\right)^{N-1}-1)\left\|{\Xt}\right\|_F^4$
    \item $\Var{\norm{\fCP{R}(\Xt)}^2_2}\leq~\frac{1}{k}\left(3^{N-1}\left(1+\frac{2}{R}\right)-1\right)\nbr{\Xt}_F^4$
\end{itemize}
\end{theorem}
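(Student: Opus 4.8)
The plan is to reduce all three claims to the second and fourth moments of a single coordinate of the projection, compute those by a Wick (Isserlis) expansion over the Gaussian factors, and control the resulting tensor contractions by Cauchy--Schwarz. \textbf{Reduction.} For either map, the rescaled coordinates $Z_i:=\sqrt k\,(f(\Xt))_i=\langle\Bt_i,\Xt\rangle$, where $\Bt_i$ is the $i$-th random low-rank tensor, are i.i.d.\ copies of a single variable $Z=\langle\Bt,\Xt\rangle$. Hence $\Ex{\|f(\Xt)\|^2}=\Ex{Z^2}$ and $\Var{\|f(\Xt)\|^2}=\frac1k\Var{Z^2}=\frac1k\big(\Ex{Z^4}-(\Ex{Z^2})^2\big)$, so it suffices to prove $\Ex{Z^2}=\|\Xt\|_F^2$ (which gives the first bullet) and to upper bound $\Ex{Z^4}$.

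\textbf{Moments via Wick.} Write $Z=\sum_{\ib}\Bt_{\ib}\Xt_{\ib}$ and expand $\Bt_{\ib}$ over its internal indices — the single CP rank index, resp.\ the $N-1$ TT bond indices — as a sum of products of independent centred Gaussian core/factor entries. Applying Isserlis' theorem mode by mode (the factors of distinct modes and of distinct internal-index values being independent): for $\Ex{\Bt_{\ib}\Bt_{\jb}}$ exactly one pairing survives in each mode, and the product of the prescribed variances ($R^{-(N-1)}$ for TT, using $1/\sqrt R$ at the two ends and $1/R$ in the middle; $R^{-1}$ for CP) exactly cancels the number $R^{N-1}$ (resp.\ $R$) of compatible internal configurations, so $\Ex{\Bt_{\ib}\Bt_{\jb}}=\delta_{\ib\jb}$ and $\Ex{Z^2}=\|\Xt\|_F^2$. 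For the fourth moment each mode selects one of the three pairings of the four copies $\{I,J,K,L\}$, hence
\[
\Ex{\Bt_{\ib}\Bt_{\jb}\Bt_{\kb}\Bt_{\lb}}=\sum_{\cbb\in\{1,2,3\}^N}\alpha_{\cbb}\prod_{n=1}^{N}\delta^{(c_n)}_{n},
\]
where $\delta^{(1)}_n=\delta_{i_nj_n}\delta_{k_nl_n}$, $\delta^{(2)}_n=\delta_{i_nk_n}\delta_{j_nl_n}$, $\delta^{(3)}_n=\delta_{i_nl_n}\delta_{j_nk_n}$, and $\alpha_{\cbb}\ge 0$ absorbs the variance factors together with the count of internal configurations consistent with the per-mode matchings encoded by $\cbb$.

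\textbf{Coefficients and the contraction bound.} The combinatorics of $\alpha_{\cbb}$ is where TT beats CP. For CP the four rank indices get identified according to the union of the $N$ chosen matchings on $\{I,J,K,L\}$; this union has two connected components iff all $c_n$ coincide and one otherwise, so $\alpha^{\mathrm{CP}}_{\cbb}=1$ for the $3$ constant $\cbb$ and $\alpha^{\mathrm{CP}}_{\cbb}=1/R$ otherwise, giving $\sum_{\cbb}\alpha^{\mathrm{CP}}_{\cbb}=3+(3^N-3)/R$. For TT, bond position $p$ is constrained simultaneously by the matchings of cores $p$ and $p+1$, so the free bond assignments number $R^{(N-1)+e(\cbb)}$ with $e(\cbb)=\#\{p\in[N-1]:c_p=c_{p+1}\}$; against the variance factor $R^{-2(N-1)}$ this gives $\alpha^{\mathrm{TT}}_{\cbb}=R^{\,e(\cbb)-(N-1)}$, and summing a weight $R$ per equal adjacent pair factorizes to $\sum_{\cbb}\alpha^{\mathrm{TT}}_{\cbb}=R^{-(N-1)}3(R+2)^{N-1}=3(1+2/R)^{N-1}$. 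Finally I would prove the uniform bound $\big|\sum_{\ib,\jb,\kb,\lb}\Xt_{\ib}\Xt_{\jb}\Xt_{\kb}\Xt_{\lb}\prod_n\delta^{(c_n)}_n\big|\le\|\Xt\|_F^4$ for every $\cbb$: the choice $\cbb$ partitions $[N]$ into $S_1,S_2,S_3$, and after substituting the forced equalities the contraction becomes $\sum_{\yb,\yb',\zb,\zb'}F_{(\yb,\zb),(\yb',\zb')}F_{(\yb,\zb'),(\yb',\zb)}$ with $\yb,\zb$ indexing the $S_2$-, resp.\ $S_3$-modes and $F=\Xt_{(S_1)}^{\ts}\Xt_{(S_1)}$ the Gram matrix of the mode-$S_1$ matricization; Cauchy--Schwarz bounds it by $\|F\|_F^2=\sum_i\sigma_i(\Xt_{(S_1)})^4\le(\sum_i\sigma_i^2)^2=\|\Xt\|_F^4$. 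Combining, $\Ex{Z^4}\le\|\Xt\|_F^4\sum_{\cbb}\alpha_{\cbb}$, so $\Var{Z^2}\le(\sum_{\cbb}\alpha_{\cbb}-1)\|\Xt\|_F^4$; this is exactly $(3(1+2/R)^{N-1}-1)\|\Xt\|_F^4$ for TT, and for CP it equals $(2+(3^N-3)/R)\|\Xt\|_F^4\le(3^{N-1}(1+2/R)-1)\|\Xt\|_F^4$ since $3^{N-1}(1+2/R)-3-(3^N-3)/R=(3^{N-1}-3)(1-1/R)\ge 0$ for $N\ge2$, $R\ge1$. Dividing by $k$ yields the two variance bullets.

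\textbf{Main obstacle.} I expect the delicate part to be the bookkeeping of the Wick expansion for TT: tracking how the matching chosen at a core simultaneously pins down both of its adjacent bond indices across all four copies, and verifying that the end/middle variance normalization makes the powers of $R$ telescope into the clean count $3(1+2/R)^{N-1}$. The only other nonroutine step is recognizing an arbitrary four-fold self-contraction of $\Xt$ as the squared Frobenius norm of a Gram matrix of a matricization, which is what makes the per-term estimate $\|\Xt\|_F^4$ hold uniformly over all mode partitions.
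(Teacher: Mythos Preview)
Your proof is correct and takes a genuinely different route from the paper. The paper proceeds by a \emph{recursive peeling} argument: it isolates one core at a time, first applying Isserlis (their Lemma~\ref{Innerproduct}) to pull off $\Gt^1$ and convert $\Ex{y_1^4}$ into $3\sigma_1^4\,\Ex{\|\Mt^2\|_F^4}$, then repeatedly applying a Wishart fourth-moment identity (their Lemma~\ref{Wishartidentity}, which uses $\tr((\Bb^\ts\Bb)^2)\le\|\Bb\|_F^4$) to peel off each remaining core and collect a factor $R(R+2)$ per step. The CP proof in their appendix is analogous but each peel only contributes the Isserlis factor $3$ until the final step picks up a single $(1+2/R)$. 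By contrast, you do the full Isserlis expansion over all $N$ modes at once, producing a sum over colorings $\cbb\in\{1,2,3\}^N$ with explicit weights $\alpha_{\cbb}$, and then bound every resulting four-fold self-contraction of $\Xt$ uniformly by $\|\Xt\|_F^4$ via a single Cauchy--Schwarz on the Gram matrix $\Xt_{(S_1)}^\ts\Xt_{(S_1)}$.

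What each buys: the paper's inductive argument is short and leans on the off-the-shelf Wishart moment formula, but it applies the Frobenius sub-multiplicativity inequality once per peel, which is why the CP bound degrades to $3^{N-1}(1+2/R)$. Your global-combinatorial approach makes the structural difference between TT and CP completely transparent (transfer-matrix count $3(R+2)^{N-1}$ versus connected-component count $3+(3^N-3)/R$), treats both formats in one framework, and in fact yields the strictly sharper CP constant $2+(3^N-3)/R$, which you then correctly verify is dominated by the paper's stated bound for $N\ge2$, $R\ge1$. Your identification of the self-contraction as $\sum F_{(\yb,\zb),(\yb',\zb')}F_{(\yb,\zb'),(\yb',\zb)}\le\|F\|_F^2\le\|\Xt\|_F^4$ is exactly the same inequality the paper invokes inside Lemma~\ref{Wishartidentity}, just applied once globally rather than $N-1$ times locally.
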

The proof of this theorem for the TT random projection map is given in the next section and the proof for $\fCP{R}$ can be found in the Appendix.

In the case of vector inputs, \ie $N=1$, we recover the classical expression for the variance of Gaussian random projections given by $\Var{\norm{f(\xb)}}^2=\frac{2}{k}\norm{\xb}^4$~(note that in this setting $R$ is necessarily equal to $1$ since $N=1$).

It is worth mentioning  that the only inequality used to derive the bounds comes from the sub-multiplicativity of the Frobenius norm applied to matricizations of the input tensor $\Xt$. For example, for the case of order 2 input tensors, \ie matrices, the variance of $\fTT{R}$ is given by 

$\Var{\norm{\fTT{R}(\Xb)}^2} = \frac{1}{k}\left(2\norm{\Xb}^4_F+\frac{6}{R} \Tr{(\Xb^\ts\Xb)^2}\right).$

Comparing now the bounds on the variance of $\fTT{R}$ and $\fCP{R}$, we observe that while both bounds have an exponential dependency on the order $N$ of the input tensors, slightly increasing the rank $R$ of the TT random projection mitigates this dependency while it has no effect for the CP random projection. This shows that $\fCP{R}$ is not a suitable RP since $k$ has to grow exponentially in $N$ in order to approach the variance of classical Gaussian random projections.
Using the bounds on the variance of the projections, we can now derive lower bounds on the size $k$ of the random projections $\fCP{R}$ and $\fTT{R}$ needed to satisfy the JL property with high probability.

\begin{theorem}\label{JLpropertyCPTT}
Let $P \subset\RR^{d_1\times d_2 \times \cdots \times d_N} $ be a set of $m$ order $N$ tensors. Then, for any $\varepsilon>0$ and any $\delta>0$, the following hold simultaneously for all $\Xt\in P$:
\begin{itemize}
    \item if $k\gtrsim \varepsilon^{-2}(1+2/R)^N\mathrm{log}^{2N}\left(\frac{m}{\delta}\right)$ then\\
    $\PP(\left\|{\fTT{R}(\Xt)}\right\|_2^2 = (1\pm\varepsilon)\left\|{\Xt}\right\|_F^2)\geq 1-\delta$,
    \item if $k\gtrsim \varepsilon^{-2}3^{N-1}(1+2/R)\mathrm{log}^{2N}\left(\frac{m}{\delta}\right)$ then\\
    $\PP(\left\|{\fCP{R}(\Xt)}\right\|_2^2 = (1\pm\varepsilon)\left\|{\Xt}\right\|_F^2)\geq 1-\delta$.
\end{itemize}
\end{theorem}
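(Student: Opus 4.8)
The plan is to turn the variance bound of Theorem~\ref{variance} into the claimed tail bound by controlling \emph{all} higher moments of $\norm{\fTT{R}(\Xt)}_2^2$ (resp. $\norm{\fCP{R}(\Xt)}_2^2$), applying Markov's inequality at a moment order scaling like $\log(m/\delta)$, and finishing with a union bound over the $m$ tensors in $P$. Fix $\Xt$ and note that $\norm{\fTT{R}(\Xt)}_2^2 = \tfrac1k\sum_{i=1}^k Z_i$ where $Z_i := \langle\TT{\Gt_i^1,\dots,\Gt_i^N},\Xt\rangle^2$ are i.i.d. (and similarly for $\fCP{R}$ with $Z_i := \langle\CP{\Ab_i^1,\dots,\Ab_i^N},\Xt\rangle^2$). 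Specializing Theorem~\ref{variance} to $k=1$ gives $\Ex{Z_i}=\norm{\Xt}_F^2$ and $\Var{Z_i}=V\norm{\Xt}_F^4$ with $V\le 3(1+2/R)^{N-1}$ in the TT case and $V\le 3^{N-1}(1+2/R)$ in the CP case; these are exactly the variance coefficients appearing in the statement.

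Second, I would bound the higher moments of $Z_i$. Writing $W_i=\langle\TT{\Gt_i^1,\dots,\Gt_i^N},\Xt\rangle$ (resp. its CP analogue) and expanding the inner product, $W_i$ is a homogeneous multilinear polynomial of degree $N$ in the i.i.d. centered Gaussian entries of the cores (resp. factor matrices): every monomial picks exactly one entry from each of the $N$ cores, and distinct cores are independent. Hence $W_i$ is a Gaussian chaos of order $N$, and Gaussian hypercontractivity gives $\Ex{W_i^{2p}}^{1/(2p)}\le (2p-1)^{N/2}\Ex{W_i^2}^{1/2}=(2p-1)^{N/2}\norm{\Xt}_F$, so that $\Ex{Z_i^p}^{1/p}=\Ex{W_i^{2p}}^{1/p}\le (2p-1)^{N}\norm{\Xt}_F^2$ for every $p\ge 1$.

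Third, feeding these into a Rosenthal-type moment inequality for sums of i.i.d. centered variables, for $p\ge 2$ one gets
\begin{align*}
\Big\| \tfrac1k \sum_{i=1}^k (Z_i - \Ex{Z_i}) \Big\|_p
&\lesssim \sqrt{\tfrac pk}\,\sqrt{\Var{Z_1}} + \tfrac pk\, \big\| Z_1 - \Ex{Z_1} \big\|_p \\
&\lesssim \sqrt{\tfrac{pV}{k}}\,\norm{\Xt}_F^2 + \tfrac{p^{N+1}}{k}\,\norm{\Xt}_F^2 .
\end{align*}
Choosing $p \asymp \log(m/\delta)$ and applying Markov to the $p$-th moment yields $\PP\big(\big|\norm{\fTT{R}(\Xt)}_2^2 - \norm{\Xt}_F^2\big| \ge \varepsilon\norm{\Xt}_F^2\big)\le\delta/m$ as soon as both terms above are $\lesssim\varepsilon\norm{\Xt}_F^2$, i.e. as soon as $k\gtrsim \varepsilon^{-2}V\log(m/\delta)+\varepsilon^{-1}\log^{N+1}(m/\delta)$; bounding this crudely (for $\varepsilon\le 1$, $N\ge1$) by $\varepsilon^{-2}V\log^{2N}(m/\delta)$ and recalling $V\le 3(1+2/R)^{N-1}\le 3(1+2/R)^{N}$ recovers the threshold for $\fTT{R}$, and $V\le 3^{N-1}(1+2/R)$ recovers the one for $\fCP{R}$. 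A union bound over the $m$ tensors $\Xt\in P$ then converts the per-point failure probability $\delta/m$ into overall failure probability $\delta$.

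The main obstacle is the third step: one must push the degree-$N$ chaos moment growth through a moment inequality for sums of i.i.d. (heavy-tailed, for $N\ge2$) variables and make sure the ``tail'' term does not spoil the $\varepsilon^{-2}$ scaling or the variance coefficient in the leading term. A technically lighter route, closer in spirit to the variance computation of Theorem~\ref{variance}, is to bound the even central moments $\Ex{(\norm{\fTT{R}(\Xt)}_2^2 - \norm{\Xt}_F^2)^{2q}}$ directly by expanding and using independence across the $k$ components together with the per-component moment bound of the second step; this again gives a bound of the form $(Cq\,V/k)^q\norm{\Xt}_F^{4q}+k^{1-2q}(Cq^N)^{2q}\norm{\Xt}_F^{4q}$ and leads to the same conclusion after optimizing over $q$. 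Either way, the key enabling observation is that $W_i$ is an order-$N$ Gaussian chaos (step two), which is what makes all higher moments tractable; the specific variance scalings in Definitions~\ref{TT.definition} and~\ref{CP.definition} only affect constants, not the chaos order.
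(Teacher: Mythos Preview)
Your approach is correct but takes a genuinely different route from the paper. The paper does not exploit the i.i.d.\ structure of the $Z_i$'s at all: instead it observes that $\norm{\fTT{R}(\Xt)}_2^2$ (as a function of all the core entries across all $k$ components) is a degree-$2N$ polynomial of independent Gaussians and applies the hypercontractivity concentration inequality of Schudy--Sviridenko as a black box, which bounds $\PP(|f(Y)-\EE f(Y)|\ge\lambda)$ by $e^2\exp\big(-(\lambda^2/(K\Var f))^{1/q}\big)$ for degree-$q$ Gaussian polynomials. Plugging in the variance bound of Theorem~\ref{variance} with $\lambda=\varepsilon\norm{\Xt}_F^2$ and $q=2N$ immediately yields a tail bound of the form $C\exp\big(-(\sqrt{k}\varepsilon)^{1/N}/c_R\big)$, and solving for $k$ after a union bound gives the stated threshold.

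Your route---chaos hypercontractivity on each $W_i$, then a Rosenthal-type moment bound for the i.i.d.\ sum, then Markov at order $p\asymp\log(m/\delta)$---is more hands-on but arrives at the same conclusion. It has two advantages worth noting: it makes the i.i.d.\ averaging explicit (which is where the $1/k$ in the variance really comes from), and your intermediate bound $k\gtrsim\varepsilon^{-2}V\log(m/\delta)+\varepsilon^{-1}\log^{N+1}(m/\delta)$ is actually \emph{sharper} than the paper's $\varepsilon^{-2}V\log^{2N}(m/\delta)$ in the regime where the Gaussian term dominates. The paper's route, on the other hand, is a one-line application of a single cited inequality and needs nothing beyond the variance already computed in Theorem~\ref{variance}. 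Both arguments ultimately rest on the same phenomenon (moment growth of Gaussian chaos of order $N$), just packaged differently.
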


\subsection{Comparison to related work}\label{section:related.work} We conclude this section by comparing the previous theorem with the closest related work. 
Jin et al.~\citep{jin2019faster} proposed a Kronecker structured JL transform satisfying the JL property for $m$ points with probability $1-\delta$ as soon as $k \gtrsim \varepsilon^{-2}\log^{2N-1}\left(\frac{m}{\delta}\right)\log (d^N)$, up to polylog factors. Our results are similar to theirs but differ in one key aspect. In their work, projecting a rank one tensor can be done in $\bigo(N d\log d + k)$. Hence, by linearity, projecting a tensor of rank $\tilde{R}$ given in the CP format can be done in $\bigo(\tilde{R}(N d\log d + k))$. However, low rank tensors given in the TT format cannot be efficiently projected using their method\footnote{Indeed, almost all low rank TT tensors have exponentially large CP rank~(see e.g, Theorem~1 in~\citep{khrulkov2018expressive})}, In contrast, $\fTT{R}(\Xt)$ and $\fCP{R}(\Xt)$ can both be computed in $\bigo(k N d \max(R,\tilde{R})^3)$ when $\Xt$ is given as a rank $\tilde{R}$ CP or TT tensor; our approach is thus better suited for inputs given in the TT format. In~\citep{sun2018tensor}, they proposed a tensor random projection map for sub-Gaussian random variables.
They give a lower bound of $k\gtrsim \varepsilon^{-2}\log^8\left(\frac{m}{\delta}\right)$ only for the case of order $2$ input tensors, treating the rank parameter $R$ as a constant. Moreover, even in the case of order $2$ input tensors our lower bound of $\varepsilon^{-2}(1+2/R)\log^4\left(\frac{m}{\delta}\right)$ is tighter than the one they provide.
\section{Proofs}
In this section, we present the proofs of our results for the random projection map $\fTT{R}$. The techniques used for the map $\fCP{R}$ are of a similar flavor and can be found in the Appendix. 

 \subsection{Proof of  Theorem~\ref{variance}: TT case}
\paragraph{Expected isometry.} We start by showing that $\fTT{R}$ is an expected isometry, \ie that $\EE\left\|{\fTT{R}(\Xt)}\right\|_2^2    =\left\|{\Xt}\right\|_F^2$. 
Let $y_i =\langle\TT{\Gt_i^1,\Gt_i^2,\cdots,\Gt_i^N},\Xt\rangle$ and $\yb = [y_1,y_2,\cdots,y_k]$. With these definitions we have $\fTT{R}(\Xt) = \frac{1}{\sqrt{k}} \yb$ and it is thus sufficient to find $\EE[y_1^2]$. 
To lighten the notation, let $\Gt^n=\Gt^n_1$ for each $n\in [N]$ and let $\St = \TT{\Gt^1,\Gt^2,\cdots,\Gt^N}$. We have
\begin{align*}
\EE[y_1^2] 
&=
\EE[\langle \St,\Xt\rangle^2] =
\EE[\langle \St\otimes \St,\Xt\otimes \Xt\rangle]\\
&=
\langle \EE[\St\otimes \St],\Xt\otimes \Xt\rangle.
\end{align*}
Using the fact that the core tensors $\Gt^n$ are independent, we have
\begin{align*}
\MoveEqLeft
\EE[\St\otimes \St]
=
\EE[\TT{\Gt^1\otimes\Gt^1,\cdots,\Gt^N\otimes\Gt^N }] \\
&=
\TT{\EE[\Gt^1\otimes\Gt^1],\cdots,\EE[\Gt^N\otimes\Gt^N] }.
\end{align*}
Now,  for $1<n<N$, since the entries of each core tensor $\Gt^n$ are i.i.d. Gaussian variables with mean $0$ and variance $1/R$, we have
$$ \EE[\Gt^n\otimes\Gt^n]=\frac{1}{R} \vectorize(\Ib_R)\circ \vectorize(\Ib_{d_n})\circ\vectorize(\Ib_R).$$
Similarly,  
$\EE[\Gt^1\otimes\Gt^1]=\frac{1}{\sqrt{R}}\vectorize(\Ib_{d_1})\circ\vectorize(\Ib_R)$
and 
$\EE[\Gt^N\otimes\Gt^N]=\frac{1}{\sqrt{R}}\vectorize(\Ib_R)\circ\vectorize(\Ib_{d_N}).$

A careful but straightforward derivation consequently shows that
$
\EE[ \St\otimes \St] 
=
 \vectorize(\Ib_{d_1}) \circ \cdots\circ \vectorize(\Ib_{d_N}),
$
which implies 
$
\EE[y_1^2] =
\langle \EE[\St\otimes \St],\Xt\otimes \Xt\rangle =
\norm{\Xt}^2_F.
$
From which  $\EE\left\|{\fTT{R}(\Xt)}\right\|_2^2    =\left\|{\Xt}\right\|_F^2$  directly follows.

\paragraph{Bound on the variance of $\fTT{R}$.}
In order to bound the variance of $\norm{\yb}_2^2$ we need to bound $\EE[\left\|{\yb}\right\|_2^4]$. We have
\begin{align*}
\EE[\left\|{\yb}\right\|_2^4]=\sum_{i=1}^k\EE[ y_i^4]+\sum_{i\neq j}\EE [y_i^2y_j^2].
\end{align*} 
Since $y_i$ and $y_j$ are independent whenever $i\neq j$ and $\EE[y_i^2]=\nbr{\Xt}_F^4$ for all $i$, the second summand is equal to $k(k-1)\nbr{\Xt}_F^4$. We now derive a bound on $\EE[y_1^4]$.

Our proof relies on the following technical lemmas%
. The first one is a direct consequence of \textit{Isserlis' theorem}~\citep{isserlis1918formula} and the second one follows from standard properties of the Wishart distribution~(see \eg Section 3.3.6 of~\citep{gupta2018matrix}).

\begin{lemma}\label{Innerproduct}
Let $\Ab\in\RR^{m\times n}$ be a random matrix whose entries are i.i.d normal random variables with mean zero and variance $\sigma^2$, and let $\Bb\in\RR^{m\times n}$ be a (random) matrix independent of $\Ab$. Then,
\begin{align*}
\EE\langle\Ab,\Bb\rangle^4=3\sigma^4\EE\left\|{\Bb}\right\|_F^4.
\end{align*}
\end{lemma}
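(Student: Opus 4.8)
\textbf{Proof proposal for Lemma~\ref{Innerproduct}.}

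The plan is to exploit the fact that, conditionally on $\Bb$, the quantity $\langle\Ab,\Bb\rangle = \sum_{i,j} A_{ij}B_{ij}$ is a linear combination of the i.i.d.\ centered Gaussians $A_{ij}$, hence itself a centered Gaussian. First I would condition on $\Bb$ (using independence of $\Ab$ and $\Bb$) and compute the conditional variance: $\Var(\langle\Ab,\Bb\rangle \mid \Bb) = \sigma^2 \sum_{i,j} B_{ij}^2 = \sigma^2 \norm{\Bb}_F^2$. Then, since the fourth moment of a centered Gaussian $Z$ with variance $v$ is $\EE[Z^4] = 3 v^2$, we get $\EE[\langle\Ab,\Bb\rangle^4 \mid \Bb] = 3\sigma^4 \norm{\Bb}_F^4$. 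Taking the expectation over $\Bb$ and using the tower property yields $\EE\langle\Ab,\Bb\rangle^4 = 3\sigma^4 \EE\norm{\Bb}_F^4$, which is the claim.

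Alternatively, if one wants to invoke Isserlis' theorem directly as the paper suggests, I would expand $\EE\langle\Ab,\Bb\rangle^4 = \sum_{i_1 j_1}\sum_{i_2 j_2}\sum_{i_3 j_3}\sum_{i_4 j_4} \EE[A_{i_1 j_1}A_{i_2 j_2}A_{i_3 j_3}A_{i_4 j_4}]\,\EE[B_{i_1 j_1}B_{i_2 j_2}B_{i_3 j_3}B_{i_4 j_4}]$, pulling the $\Bb$-expectation out by independence. Isserlis' theorem gives $\EE[A_{i_1 j_1}A_{i_2 j_2}A_{i_3 j_3}A_{i_4 j_4}]$ as a sum over the three pairings of $\{1,2,3,4\}$ of products of covariances; since the $A$ entries are independent with variance $\sigma^2$, only terms where the paired indices coincide survive, each contributing $\sigma^4$. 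Collecting the three pairings reduces the sum to $3\sigma^4 \sum_{(i,j),(i',j')} \EE[B_{ij}^2 B_{i'j'}^2] = 3\sigma^4 \EE\big[\big(\sum_{i,j}B_{ij}^2\big)^2\big] = 3\sigma^4 \EE\norm{\Bb}_F^4$.

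I expect there to be essentially no obstacle here: this is a short second-moment/fourth-moment computation, and the only mild care needed is to justify the conditioning step (independence of $\Ab$ and $\Bb$ lets us treat $\Bb$ as a fixed matrix when integrating over $\Ab$) and to recall the Gaussian fourth-moment identity $\EE[Z^4]=3(\EE[Z^2])^2$. The conditioning argument is the cleanest route and avoids the combinatorial bookkeeping of the Isserlis expansion altogether.
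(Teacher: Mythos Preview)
Your proposal is correct. Your second route---expanding the fourth power, pulling the $\Bb$-expectation out by independence, and applying Isserlis' theorem to the four-fold product of Gaussian entries---is exactly what the paper does: it vectorizes $\Ab$ and $\Bb$, writes $\EE\langle\Ab,\Bb\rangle^4=\langle\EE[\ab^{\otimes 4}],\EE[\bb^{\otimes 4}]\rangle$, computes $\EE[\ab^{\otimes 4}]$ entrywise via Isserlis to get the sum of three Kronecker-delta pairings, and then collapses the sum to $3\sigma^4\EE\nbr{\Bb}_F^4$.

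Your first route, however, is genuinely different and cleaner than the paper's argument. By conditioning on $\Bb$ you reduce the problem to the one-dimensional Gaussian moment identity $\EE[Z^4]=3(\EE[Z^2])^2$, bypassing Isserlis' theorem and the index bookkeeping entirely. The paper's approach has the mild advantage of making the tensor-product structure explicit (which is thematically consistent with the rest of the proof of Theorem~\ref{variance}), but your conditioning argument is shorter, requires no combinatorics, and would generalize immediately to any distribution for the entries of $\Ab$ whose fourth moment is a known multiple of the squared variance.
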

\begin{proof}
Setting $\ab = \vectorize(\Ab)\in\RR^{mn}$ and $\bb = \vectorize(\Bb)\in\RR^{mn}$, we have
\begin{align*}
\EE\langle\Ab,\Bb\rangle^4 &=\EE\langle\ab,\bb\rangle^4\\
&= \EE\langle\ab^{\otimes 4},\bb^{\otimes 4}\rangle=\langle\EE[\ab^{\otimes 4}],\EE[\bb^{\otimes 4}]\rangle,
\end{align*}
where the last equality is obtained by using the independence between $\ab$ and $\bb$. Element-wise, by using Isserlis' theorem~\citep{isserlis1918formula} and using the fact that $\ab\sim\mathcal{N}(\zero,\sigma^2\Ib)$ we have, 
\begin{align*}
\MoveEqLeft
(\EE[\ab^{\otimes 4}]))_{i_1,i_2,i_3,i_4} 
=
\EE[\ab_{i_1}\ab_{i_2}\ab_{i_3}\ab_{i_4}] \\
&= 
\EE[\ab_{i_1}\ab_{i_2}]\EE[\ab_{i_3}\ab_{i_4}] + \EE[\ab_{i_1}\ab_{i_3}]\EE[\ab_{i_2}\ab_{i_4}] \\ 
&+ \EE[\ab_{i_1}\ab_{i_4}]\EE[\ab_{i_2}\ab_{i_3}]\\
&= (\delta_{i_1i_2}\delta_{i_3i_4} + \delta_{i_1i_3}\delta_{i_2i_4} + \delta_{i_1i_4}\delta_{i_2i_3})\sigma^4,
\end{align*} 
where $\delta$ is the Kronecker symbol. Therefore, letting $\Delta_{i_1i_2i_3i_4}=\delta_{i_1i_2}\delta_{i_3i_4} + \delta_{i_1i_3}\delta_{i_2i_4} + \delta_{i_1i_4}\delta_{i_2i_3}$, we obtain
\begin{align*}
\MoveEqLeft
\EE\langle\Ab,\Bb\rangle^4 
=
\sum_{i_1,i_2,i_3,i_4} \EE[\ab_{i_1}\ab_{i_2}\ab_{i_3}\ab_{i_4}]  \EE[\bb_{i_1}\bb_{i_2}\bb_{i_3}\bb_{i_4}] \nonumber\\
&=
\sigma^4\sum_{i_1,i_2,i_3,i_4}\Delta_{i_1i_2i_3i_4}\EE[\bb_{i_1}\bb_{i_2}\bb_{i_3}\bb_{i_4}]\nonumber\\
&=
\sigma^4~\EE\left[\sum_{i_1,i_3}\bb_{i_1}^2\bb_{i_3}^2 +\sum_{i_1,i_4}\bb_{i_1}^2\bb_{i_4}^2 +\sum_{i_1,i_2}\bb_{i_1}^2\bb_{i_2}^2  \right]\nonumber\\
&=
3\sigma^4\EE\nbr{\Bb}_F^4.&\qedhere
\end{align*}
\end{proof}

\begin{lemma}\label{Wishartidentity}
Let $\Ab\in\RR^{m\times n}$ be a random matrix whose entries are i.i.d Gaussian random variables with mean zero and variance $\sigma^2$, and let $\Bb\in\RR^{p\times m}$ be a (random) matrix independent of $\Ab$. Then,
\begin{align*}
\EE\nbr{\Bb\Ab}_F^4&=n\sigma^4\left(n~\EE\left\|{\Bb}\right\|_F^4+2\EE\tr((\Bb^\ts\Bb)^2)\right)\\
&\leq\sigma^4n(n+2)\EE\nbr{\Bb}_F^4.
\end{align*}
\end{lemma}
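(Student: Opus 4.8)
The plan is to condition on $\Bb$ and reduce the quantity to the second moment of a sum of independent quadratic forms in Gaussian variables. First I would write $\Ab=[\ab_1,\dots,\ab_n]$ in terms of its columns, which are i.i.d.\ $\mathcal{N}(\zero,\sigma^2\Ib_m)$ and independent of $\Bb$. Setting $\Mb=\Bb^\ts\Bb$ (a conditionally fixed positive semidefinite matrix), the identity
$$\nbr{\Bb\Ab}_F^2 = \tr(\Mb\Ab\Ab^\ts) = \sum_{j=1}^n \ab_j^\ts\Mb\ab_j$$
exhibits $\nbr{\Bb\Ab}_F^2$ as a sum of $n$ i.i.d.\ random variables $Z_j := \ab_j^\ts\Mb\ab_j$. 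Expanding the square and using independence of the columns then gives $\EE\!\left[\nbr{\Bb\Ab}_F^4 \mid \Bb\right] = n\,\EE[Z_1^2] + n(n-1)\,(\EE[Z_1])^2$, so it remains to compute the first two moments of the quadratic form $Z_1=\ab^\ts\Mb\ab$.

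For this, I would diagonalize $\Mb=\Ub\,\diag(\lambda_1,\dots,\lambda_m)\,\Ub^\ts$ with $\lambda_i\geq 0$ and set $\gb=\Ub^\ts\ab\sim\mathcal{N}(\zero,\sigma^2\Ib_m)$, so that $Z_1=\sum_i\lambda_i\gb_i^2$ with independent entries $\gb_i\sim\mathcal{N}(0,\sigma^2)$. This yields $\EE[Z_1]=\sigma^2\sum_i\lambda_i=\sigma^2\tr(\Mb)=\sigma^2\nbr{\Bb}_F^2$, and, using $\EE[\gb_i^4]=3\sigma^4$ and expanding,
\begin{align*}
\EE[Z_1^2] &= 3\sigma^4\sum_i\lambda_i^2 + \sigma^4\sum_{i\neq i'}\lambda_i\lambda_{i'} = 2\sigma^4\sum_i\lambda_i^2 + \sigma^4\Big(\sum_i\lambda_i\Big)^2\\
&= 2\sigma^4\,\tr\!\big((\Bb^\ts\Bb)^2\big) + \sigma^4\nbr{\Bb}_F^4 .
\end{align*}
Substituting into the expansion above and collecting terms gives $\EE\!\left[\nbr{\Bb\Ab}_F^4\mid\Bb\right]=n\sigma^4\big(n\nbr{\Bb}_F^4+2\tr((\Bb^\ts\Bb)^2)\big)$; taking the expectation over $\Bb$ (and using its independence from $\Ab$) proves the identity. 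The inequality follows at once from $\tr((\Bb^\ts\Bb)^2)=\sum_i\lambda_i^2\leq\big(\sum_i\lambda_i\big)^2=\nbr{\Bb}_F^4$, valid since the $\lambda_i$ are nonnegative.

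I do not expect a serious obstacle here: once $\nbr{\Bb\Ab}_F^2$ is recognized as a sum of independent quadratic forms, everything is a routine moment computation, and one could alternatively just invoke the fourth-moment formula for the Wishart matrix $\Ab\Ab^\ts$ from the cited reference. The only point that needs a little care is the bookkeeping in the expansion of $\EE[(\sum_i\lambda_i\gb_i^2)^2]$ and the correct identification of $\sum_i\lambda_i^2$ with $\tr((\Bb^\ts\Bb)^2)$.
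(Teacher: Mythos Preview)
Your proof is correct but takes a genuinely different route from the paper. The paper simply writes $\EE\nbr{\Bb\Ab}_F^4=\EE[\tr(\Bb^\ts\Bb\Ab\Ab^\ts)^2]$, observes that $\Ab\Ab^\ts$ is Wishart with parameters $m,n,\sigma^2\Ib_m$, and then invokes the standard second-moment formula for $\tr(\Cb\Wb)$ with $\Wb$ Wishart (citing Section~3.3.6 of~\citep{gupta2018matrix}) to obtain the identity in one line; for the inequality it appeals to sub-multiplicativity of the Frobenius norm, i.e.\ $\tr((\Bb^\ts\Bb)^2)=\nbr{\Bb^\ts\Bb}_F^2\leq\nbr{\Bb}_F^4$. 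You instead give a fully self-contained derivation: decompose $\Ab$ into i.i.d.\ Gaussian columns, reduce $\nbr{\Bb\Ab}_F^2$ to a sum of i.i.d.\ quadratic forms $\ab_j^\ts\Mb\ab_j$, diagonalize $\Mb=\Bb^\ts\Bb$, and compute the first two moments directly. Your eigenvalue argument $\sum_i\lambda_i^2\leq(\sum_i\lambda_i)^2$ for the inequality is equivalent to the paper's sub-multiplicativity step. The paper's approach is shorter if one is willing to cite the Wishart moment formula as a black box; your approach has the advantage of being elementary and not requiring any external reference (indeed, it effectively rederives the relevant Wishart identity from scratch).
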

\begin{proof}
By definition of the Frobenius norm we have 
\begin{align*}
\EE\nbr{\Bb\Ab}_F^4 = \EE\left[\tr\left(\Bb^\ts\Bb\Ab\Ab^\ts\right)\tr\left(\Bb^\ts\Bb\Ab\Ab^\ts\right)\right].
\end{align*}
Since $\Ab_{ij}\sim\mathcal{N}(0,\sigma^2)$ for any $i\in[m],j\in[n]$, $\Ab\Ab^\ts\in\RR^{m\times m}$ is a random symmetric positive definite matrix following a Wishart distribution with parameters $m,n$ and $\sigma^2\Ib_m \in \RR^{m\times m}$. Therefore,
\begin{align*}
\MoveEqLeft
\EE\left[\tr\left(\Bb^\ts\Bb\Ab\Ab^\ts\right)\tr\left(\Bb^\ts\Bb\Ab\Ab^\ts\right)\right]\\
&=
n\sigma^4\left(n\EE\left\|{\Bb}\right\|_F^4+2\EE\tr((\Bb^\ts\Bb)^2)\right)\nonumber\\
&\leq
\sigma^4n(n+2)\EE\left\|{\Bb}\right\|_F^4,\nonumber
\end{align*}
where the equality follows from standard properties of the Wishart distribution~(see \textit{e.g.}, Section 3.3.6 of~\citep{gupta2018matrix}), and the inequality follows from the sub-multiplicativity of the Frobenius norm. \qedhere

\end{proof}
Let us now start by defining the tensor $\Mt^n\in\RR^{R\times d_1\times d_2\times \cdots \times d_{n-1}}$ for each $2\leq n \leq N$ component-wise by

\begin{align*}
    &\Mt^n_{r,i_1,\dots,i_{n-1}} = \sum_{i_n,\dots,i_N\atop r_n,\dots,r_{N-1}} (\Gt^n)_{r,i_n,r_n}
(\Gt^{n+1})_{r_n,i_{n+1},r_{n+1}}\\ &\dots (\Gt^{N-1})_{r_{N-2},i_{N-1},r_{N-1}}(\Gt^N)_{r_{N-1},i_{N}} \Xt_{i_1,\dots,i_N},
\end{align*}

for each $r\in [R]$, $i_1\in[d_1],\dots,i_{n-1}\in [d_{n-1}]$.
In some sense, $\Mt^n$ is the tensor obtained by removing the first $n-1$ cores from the computation of $y_1=\langle\TT{\Gt^1,\Gt^2,\cdots,\Gt^N},\Xt\rangle$. With this definition, one can check that
$\noindent{\tiny}\bullet\langle\TT{\Gt^1,\Gt^2,\cdots,\Gt^N}\Xt\rangle
= \langle\Gt^1,\Mt^2\rangle,$
$\noindent{\tiny}\bullet\Mt^N_{(1)}=(\Gt^N)_{(1)}\Xt_{(N)}$ and
${\tiny}\bullet\Mt_{(1)}^n = (\Gt^n)_{(1)}(\Mt^{n+1})_{(1,n+1)}$ for each $n\in[N],$ where $(\Mt^{n+1})_{(1,n+1)}\in\mathbb{R}^{Rd_n\times d_1\dots d_{n-1}}$ denotes the matricization of $\Mt^{n+1}$ obtained by mapping its first  and last modes to rows and the other ones to columns. Let $\sigma^2_n$ denote the variance used to draw the entries of each core $\Gt^n$.
Using Lemma~\ref{Innerproduct} we obtain
\begin{align*}
\EE y_1^4 &= \EE\langle\TT{\Gt^1,\Gt^2,\cdots,\Gt^N},\Xt\rangle^4=\EE\langle\Gt^1,\Mt_{(1)}^2\rangle^4\\
&=3\sigma^4_1\EE\left\|{\Mt^2_{(1)}}\right\|_F^4=3\sigma^4_1\EE\left\|{(\Gt^2)_{(1)}\Mt^3_{(1,3)}}\right\|_F^4.
\end{align*}
 Using the fact that the Frobenius norm of a tensor is constant across all matricizations and by Lemma~\ref{Wishartidentity} we get
\begin{align*}
\EE[&y_1^4]
=
3\sigma^4_1\EE\left\|{(\Gt^2)_{(1)}\Mt^3_{(1,3)}}\right\|_F^4\\
&\leq
3\sigma^4_1\sigma^4_2R(R+2)\EE\left\|{\Mt^3_{(1,3)}}\right\|_F^4 \\
&=
3\sigma^4_1\sigma^4_2R(R+2)\EE\left\|{\Mt^3_{(1)}}\right\|_F^4 \\
&=
3\sigma^4_1\sigma^4_2R(R+2)\EE\left\|{(\Gt^3)_{(1)}\Mt_{(1,4)}^4}\right\|_F^4\\
&\leq
3\sigma^4_1\sigma^4_2\sigma^4_3R^2(R+2)^2\EE\left\|{\Mt^4_{(1,4)}}\right\|_F^4
\end{align*}

Similarly, using successive applications of Lemma~\ref{Wishartidentity} it then follows that
\begin{align*}
&\EE[y_1^4]
\leq
3\sigma^4_{1}\cdots\sigma^4_{{N-1}}R^{N-2}(R+2)^{N-2}\EE\left\|\Mt^N_{(1)}\right\|_F^4\\
&=
3\sigma^4_{1}\cdots\sigma^4_{{N-1}}R^{N-2}(R+2)^{N-2}\EE\left\|(\Gt^N)_{(1)}\Xt_{(N)}\right\|_F^4\\
&\leq
3\sigma^4_{1}\cdots\sigma^4_{{N}}R^{N-1}(R+2)^{N-1}\left\|\Xt_{(N)}\right\|_F^4\\
&=
3\frac{1}{R}\left(\frac{1}{R^2}\right)^{N-2}\frac{1}{R}R^{N-1}(R+2)^{N-1}\left\|{\Xt}\right\|_F^4\\
&=
3\left(1+\frac{2}{R}\right)^{N-1}\left\|{\Xt}\right\|_F^4.
\end{align*}

Therefore we obtain $
\EE\left\|{\yb}\right\|_2^4
\leq 3k\left(1+\frac{2}{R}\right)^{N-1}\left\|{\Xt}\right\|_F^4+k(k-1)\left\|{\Xt}\right\|_F^4.
$
Finally,
\begin{align*}
\MoveEqLeft
\Var{\nbr{\fTT{R}(\Xt)}_2^2}
= \EE[\|k^{-\frac{1}{2}}\yb\|_2^4]-\EE[ \|k^{-\frac{1}{2}}\yb\|_2^2]^2 \\
&=\frac{1}{k^2}\EE\nbr{\yb}_2^4-\nbr{\Xt}_F^4\\
&
\leq~\frac{1}{k}\left[3\left(1+\frac{2}{R}\right)^{N-1}-1\right]\nbr{\Xt}_F^4.
\end{align*}

\subsection{Proof of Theorem~\ref{JLpropertyCPTT}: TT case}
Theorem~\ref{JLpropertyCPTT} for the map $\fTT{R}$ directly follows from the following concentration bound.
\begin{theorem}\label{concenterationboundTT}
Let $\Xt\in\RR^{d_1\times d_2 \times \cdots \times d_N}$. There exist absolute constants $C$ and $K>0$ such that the  random projection map $\fTT{R}$~(see  Definition~\ref{TT.definition}) satisfies
\begin{align*}
\PP\left(\left|\left\|{\fTT{R}(\Xt)}\right\|_2^2 - \left\|{\Xt}\right\|_F^2\right| \geq \varepsilon\left\|{\Xt}\right\|_F^2\right)\leq\\
C\exp\left[-~\frac{(\sqrt{k}\varepsilon)^\frac{1}{N}}{(3K)^\frac{1}{2N}\sqrt{1+2/R}}\right].
\end{align*}
\end{theorem}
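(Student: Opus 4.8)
The plan is to reduce the claimed tail bound to a single high-moment estimate and then optimize over the moment order. Assume without loss of generality that $\nbr{\Xt}_F = 1$ (the general case then follows by homogeneity, since $\fTT{R}$ is linear) and set $W := \nbr{\fTT{R}(\Xt)}_2^2 - 1$. Theorem~\ref{variance} already supplies everything we need about the first two moments of $W$: $\EE[W] = 0$ (expected isometry) and $\EE[W^2] = \Var{\nbr{\fTT{R}(\Xt)}_2^2} \le \tfrac1k\big(3(1+2/R)^{N-1}-1\big) \le \tfrac{3}{k}(1+2/R)^{N-1}$. By Markov's inequality applied to $|W|^q$, for every real $q\ge 2$ we have $\PP(|W|\ge\varepsilon)\le \varepsilon^{-q}\,\EE|W|^q = (\nbr{W}_q/\varepsilon)^q$, so it suffices to bound $\nbr{W}_q = (\EE|W|^q)^{1/q}$ for large $q$.

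The key observation is that $W$ is, up to the additive constant $-1$, a polynomial of degree $2N$ in the i.i.d.\ Gaussian entries of the core tensors $\{\Gt_i^n : i\in[k],\,n\in[N]\}$: each $y_i = \langle \TT{\Gt_i^1,\cdots,\Gt_i^N},\Xt\rangle$ is a multilinear form of degree $N$ in $\Gt_i^1,\dots,\Gt_i^N$, hence $y_i^2$ has degree $2N$, and $W = \tfrac1k\sum_{i=1}^k y_i^2 - 1$ is a linear combination of such terms over disjoint blocks of variables. (The differing variances $\sigma_n^2$ of the cores are irrelevant here: rescaling each core to have unit-variance entries only changes the coefficients of this polynomial, which hypercontractivity does not see.) Consequently, by the classical hypercontractive moment comparison for polynomials in Gaussian random variables, for every $q\ge 2$,
\begin{align*}
\nbr{W}_q \;\le\; (q-1)^{N}\,\nbr{W}_2 \;\le\; q^{N}\sqrt{\EE[W^2]}\;\le\; \sqrt{3}\,q^{N}\,\frac{(1+2/R)^{(N-1)/2}}{\sqrt{k}}.
\end{align*}

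Combining this with Markov gives, for $q\ge 2$, $\PP(|W|\ge \varepsilon)\le (A q^{N})^{q}$ with $A := \sqrt{3}\,(1+2/R)^{(N-1)/2}\,(\sqrt{k}\,\varepsilon)^{-1}$. Optimizing the right-hand side over $q>0$ — the minimizer is $q^\star = (eA)^{-1/N}$, for which $A (q^\star)^{N} = e^{-N}$ and hence $(A (q^\star)^N)^{q^\star} = e^{-N q^\star}$ — yields
\begin{align*}
\PP(|W|\ge \varepsilon)\;\le\; \exp\!\left(-\frac{N}{e}\,A^{-1/N}\right)\;=\;\exp\!\left(-\frac{N}{e}\cdot\frac{(\sqrt{k}\,\varepsilon)^{1/N}}{3^{1/(2N)}(1+2/R)^{(N-1)/(2N)}}\right),
\end{align*}
valid whenever $q^\star\ge 2$; in the complementary regime $q^\star<2$ (i.e.\ $\sqrt k\,\varepsilon$ small) the right-hand side of the claimed bound is $\ge 1$ once $C$ is taken large enough, so it holds trivially there. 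Since $(1+2/R)^{(N-1)/(2N)}\le (1+2/R)^{1/2}$ and $\tfrac Ne\,3^{1/(2N)}$ is bounded below by an absolute constant, the remaining factors can be absorbed into absolute constants $C,K>0$ to obtain exactly the statement of Theorem~\ref{concenterationboundTT}; undoing the normalization replaces $\varepsilon$ by $\varepsilon\nbr{\Xt}_F^2$ and $1$ by $\nbr{\Xt}_F^2$ everywhere.

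The only non-routine ingredient is the hypercontractive bound on the degree-$2N$ chaos $W$; everything else is bookkeeping, and I want to stress that the $(1+2/R)$ factor in the exponent enters solely through the variance estimate of Theorem~\ref{variance}, not through the (dimension-free) hypercontractivity constant. A self-contained alternative that avoids invoking hypercontractivity is to extend the fourth-moment computation in the proof of Theorem~\ref{variance} to all even moments — replacing Lemma~\ref{Innerproduct} by the Isserlis identity $\EE\langle\Ab,\Bb\rangle^{2q} = (2q-1)!!\,\sigma^{2q}\,\EE\nbr{\Bb}_F^{2q}$ and Lemma~\ref{Wishartidentity} by the corresponding Wishart $2q$-th moment bound $\EE\nbr{\Bb\Ab}_F^{2q}\le \sigma^{2q}\big(\prod_{j=0}^{q-1}(n+2j)\big)\EE\nbr{\Bb}_F^{2q}$ — running the same telescoping over the cores, and finally controlling $\sum_i (y_i^2-\EE y_i^2)$ with a Rosenthal-type inequality; the main subtlety in that approach is tracking the cancellation between the core variances $\sigma_n$ and the Wishart factors so that only $(1+2/R)$, and not $R$, survives in the final constant.
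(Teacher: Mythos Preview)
Your proof is correct and follows essentially the same strategy as the paper: both combine the variance bound of Theorem~\ref{variance} with Gaussian hypercontractivity applied to the degree-$2N$ polynomial $\nbr{\fTT{R}(\Xt)}_2^2$. The only difference is cosmetic --- the paper invokes the packaged tail inequality of Theorem~\ref{thm:hypercontractivity} (Schudy--Sviridenko) as a black box, whereas you unroll its proof via the moment comparison $\nbr{W}_q\le(q-1)^N\nbr{W}_2$, Markov, and optimization over $q$; note the slip $q^\star=(eA)^{-1/N}$ should read $q^\star=e^{-1}A^{-1/N}$, though all your subsequent expressions ($A(q^\star)^N=e^{-N}$ and the final exponent $\tfrac{N}{e}A^{-1/N}$) are consistent with the correct value.
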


To show this concentration bound, we will use the following extension of the Hanson-Wright inequality whose proof can be found in~\citep{schudy2012concentration}.
\begin{theorem}\textbf{(Hypercontractivity Concentration Inequality)}\label{thm:hypercontractivity}
Consider a degree $q$ polynomial $f(Y)= f(Y_1,\dots,Y_n)$ of independent centered Gaussian or Rademacher random variables $Y_1,\dots,Y_n$. Then for any $\lambda>0$
\begin{align*}
\PP[\left|f(Y)-\EE\left[f(Y)\right]\right|\geq \lambda]\leq e^2.e^{-\left(\frac{\lambda^2}{K.\Var{\left[f(Y)\right]}}\right)^\frac{1}{q}},
\end{align*}
where $\Var{\left[f(Y)\right]}$ is the variance of the random variable $f(Y)$ and $K>0$ is an absolute constant. 
\end{theorem}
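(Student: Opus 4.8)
The plan is to obtain the tail bound from a moment bound via hypercontractivity, then convert it by Markov's inequality and an explicit optimization over the moment order. First I would reduce to the centered case: set $g := f(Y)-\EE[f(Y)]$, which is again a polynomial of degree at most $q$ in $Y_1,\dots,Y_n$, with $\EE[g]=0$ and $\EE[g^2]=\Var{f(Y)}=:\sigma^2$. It then suffices to bound $\PP(|g|\geq\lambda)$, and the whole argument reduces to controlling the higher moments $\EE|g|^p$ in terms of $\sigma^2$.

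\textbf{The hypercontractive moment bound.} The analytic heart of the proof is the moment-comparison (Bonami--Beckner--Nelson) inequality: for every degree-$q$ polynomial $g$ of independent centered Gaussian or Rademacher variables and every $p\geq 2$,
\[
\left(\EE|g|^p\right)^{1/p}\ \leq\ (p-1)^{q/2}\left(\EE|g|^2\right)^{1/2}\ =\ (p-1)^{q/2}\,\sigma .
\]
I would establish this by the classical route. For Rademacher variables one proves the scalar two-point inequality for a degree-$q$ polynomial in a single $\pm1$ variable and then tensorizes over the $n$ independent coordinates using the multiplicativity of the Bonami noise operator, inducting on $n$. For Gaussian variables one either passes to the limit from the Rademacher case (writing each Gaussian as a normalized sum of Rademachers and invoking a central-limit/invariance argument) or works directly with the Ornstein--Uhlenbeck semigroup and Nelson's Gaussian hypercontractivity on Wiener chaos of order $\leq q$. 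This step is the main obstacle; everything downstream is an elementary computation.

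\textbf{Moment-to-tail conversion.} Given the moment bound, Markov's inequality applied to $|g|^p$ yields, for every $p\geq 2$,
\[
\PP(|g|\geq\lambda)\ \leq\ \lambda^{-p}\,\EE|g|^p\ \leq\ \left(\frac{(p-1)^{q/2}\sigma}{\lambda}\right)^{\!p}.
\]
I would then choose $p$ to make the base equal to $e^{-1}$: solving $(p-1)^{q/2}\sigma/\lambda=e^{-1}$ gives $p=1+e^{-2/q}\left(\lambda^2/\sigma^2\right)^{1/q}$, whence the right-hand side equals $e^{-p}=e^{-1}\exp\!\big(-e^{-2/q}(\lambda^2/\sigma^2)^{1/q}\big)$. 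Using $e^{-2/q}(\lambda^2/\sigma^2)^{1/q}=\left(\lambda^2/(e^2\sigma^2)\right)^{1/q}$, this is exactly
\[
\PP(|g|\geq\lambda)\ \leq\ e^{-1}\exp\!\left(-\left(\frac{\lambda^2}{e^2\,\Var{f(Y)}}\right)^{\!1/q}\right),
\]
which is the claimed form with the absolute constant $K=e^2$.

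\textbf{Handling the two regimes and the prefactor.} The choice of $p$ above is admissible ($p\geq 2$) precisely when $\left(\lambda^2/(e^2\sigma^2)\right)^{1/q}\geq 1$, i.e. $\lambda\geq e\sigma$, which is exactly the large-deviation regime of interest. In the complementary small-$\lambda$ regime, $\left(\lambda^2/(e^2\sigma^2)\right)^{1/q}\leq 1$, so the asserted bound satisfies $e^2\exp\!\big(-(\lambda^2/(e^2\sigma^2))^{1/q}\big)\geq e^2 e^{-1}=e>1$ and therefore holds trivially. Thus the factor $e^2$ (as opposed to the sharper $e^{-1}$ obtained in the large-$\lambda$ regime) is exactly what is needed to glue the two regimes together and to cover the rounding of the real-valued optimal $p$. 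The only bookkeeping is verifying that the threshold $\lambda=e\sigma$ separating the regimes is consistent with the prefactor $e^2$ and the constant $K=e^2$; I expect no conceptual difficulty there once the hypercontractive moment bound is in hand.
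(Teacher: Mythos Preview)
Your proof is correct and follows the standard route: the Bonami--Beckner moment comparison $\|g\|_p\leq (p-1)^{q/2}\|g\|_2$ for degree-$q$ polynomials, followed by Markov's inequality and optimization in $p$, with the prefactor $e^2$ absorbing the small-$\lambda$ regime where the optimal $p$ would drop below $2$.

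The paper, however, does not prove this theorem at all: it is quoted as a black box, with the proof deferred to \cite{schudy2012concentration} (Schudy--Sviridenko). So there is no ``paper's own proof'' to compare against beyond the citation. What you have written is essentially the classical argument underlying that result; the only thing you have not spelled out is the proof of the hypercontractive moment bound itself (the tensorization/semigroup step), which you correctly flag as the main analytic input. If you wanted a fully self-contained write-up you would need to include that, but for the purposes of this paper---which only \emph{uses} the concentration inequality---your level of detail already exceeds what the authors provide.
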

Using the bound on the variance of $\left\|{\fTT{R}(\Xt)}\right\|_2^2$ and the fact that $\left\|{\fTT{R}(\Xt)}\right\|_2^2$ is a polynomial of degree $2N$ of independent Gaussian random variables~(the entries of the core tensors $\Gt_i^1,\Gt_i^2,\cdots,\Gt_i^N$), we can use Theorem~\ref{thm:hypercontractivity} to obtain
\begin{align*}
&\PP\left[\left|\left\|{\fTT{R}(\Xt)}\right\|_2^2-\left\|{\Xt}\right\|_F^2\right|\geq \lambda\right]\\
&\leq e^2\exp\left[-\left(\frac{\lambda^2}{K \Var{\left\|{\fTT{R}(\Xt)}\right\|_2^2}}\right)^\frac{1}{2N}\right].
\end{align*}
Let $C = e^2$ and let $\lambda = \varepsilon\left\|{\Xt}\right\|_F^2$ , we finally get
\begin{align*}
&\PP\left[\left|\left\|{\fTT{R}(\Xt)}\right\|_2^2-\left\|{\Xt}\right\|_F^2\right|\geq  \varepsilon\left\|{\Xt}\right\|_F^2\right]\\
&\leq 
C\exp\left[-~\left(\frac{ k\varepsilon^2\left\|{\Xt}\right\|_F^4}{3K(1+2/R)^{N-1}\left\|{\Xt}\right\|_F^4}\right)^\frac{1}{2N}\right]\\
&\leq
C\exp\left[-~\frac{(\sqrt{k}\varepsilon)^\frac{1}{N}}{(3K)^\frac{1}{2N}\sqrt{1+2/R}}\right],
\end{align*}
where the last inequality follows from the fact that
$$(1+2/R)^\frac{N-1}{2N} \leq \sqrt{1+2/R}.$$

\section{Experiments}

\begin{figure*}[th!]
\begin{center}
\includegraphics[width=\textwidth]{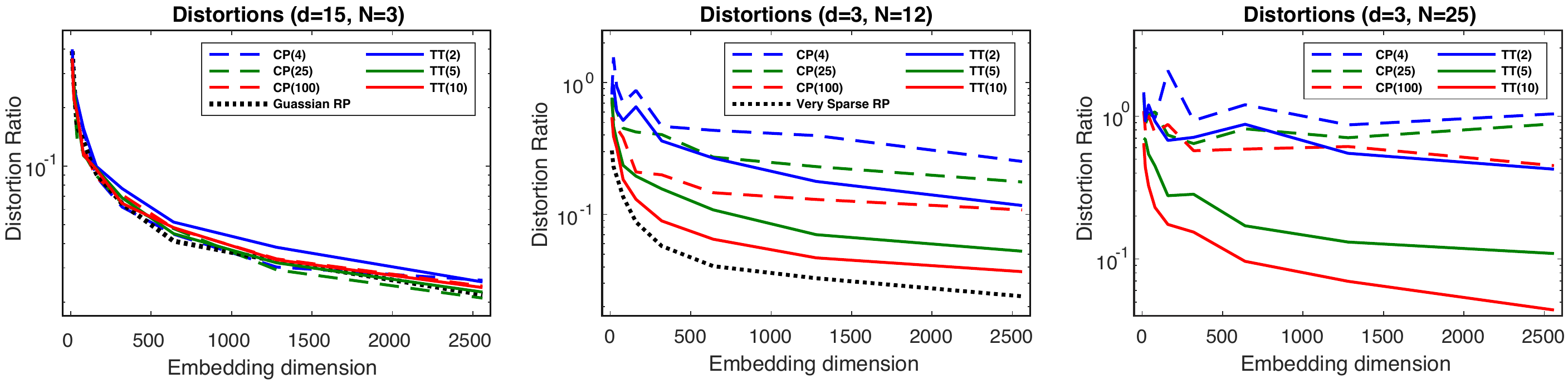}
\caption{Comparison of the distortion ratio of $\fTT{R}$,$\fCP{R}$, and Gaussian RP for different value of the rank parameter $R$ for small-order~(left), medium-order~(center) and high-order~(right) input tensors.}

\label{figure:distortions}
\end{center}
\end{figure*}

\begin{figure}[ht!]
\begin{center}
\includegraphics[width=0.39\textwidth]{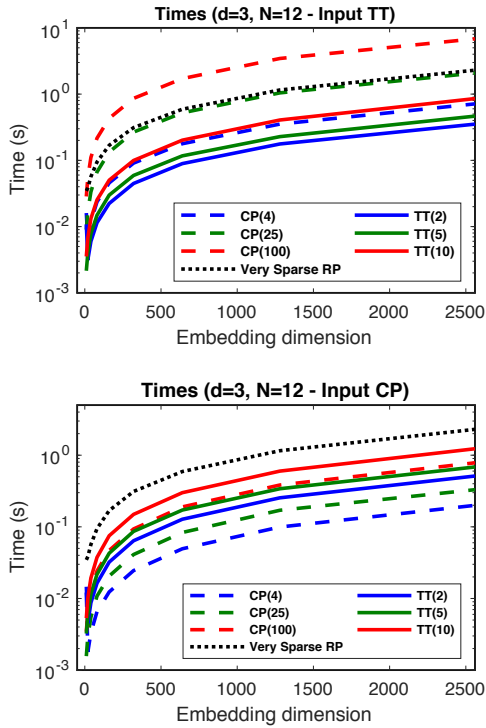}
\caption{Comparison of embedding time between tensorized and very sparse RP for the medium-order case~($d = 3,N = 12$) when the input is given in the TT format~(top) or CP format~(bottom).\vspace*{-0.5 cm}} 
\label{figure:times}
\end{center}
\end{figure}
In this section we compare the embedding quality of the tensorized projection maps $\fTT{R}$, $\fCP{R}$ and Gaussian RP in a simulation study\footnote{For these experiments we use \textit{Tensor Toolbox v3.1}~\citep{TTB_Software} and \textit{TT-Toolbox v2.2}~\citep{TT_Toolbox}.}. In particular, we investigate the effect of the rank parameter $R$ for different sizes and orders of input tensors.
We first randomly generate an $N$-th order $d$-dimensional tensor $\Xt$~(\ie vector of size $d^N$) with unit norm in the TT format with rank $\tilde{R} = 10$. To assess how well the tensorized maps scale to very high order tensors, we consider three cases: $\noindent{\tiny}\bullet~$small-order: $(d=15,N=3)$, $\noindent{\tiny}\bullet~$medium-order: $(d=3,N=12)$ and $\noindent{\tiny}\bullet~$high-order $(d=3,N=25)$.

We compare several values of the rank parameter for the two tensorized map: $R = 4,25,100$ for $\fCP{R}$ and $R = 2,5,10$ for $\fTT{R}$. Note that these values correspond to roughly the same number of parameters for the two maps since $\fTT{R}$ requires the storage of $(N-2)dR^2+2dR$ parameters while $\fCP{R}$ only needs $NdR$. Additional experiment on image data from the CIFAR-10 dataset~\citep{krizhevsky2009learning} are presented in Appendix~\ref{app:xp-cifar}.

The quality of embedding is evaluated using the distortion ratio metric defined by  $D(f,\Xt) = \abs{\frac{\nbr{f(\Xt)}^2}{\nbr{\Xt}^2}-1}$. Due to memory limitation, we compare  tensorized RP with Gaussian RP for the small-order case tensors and with very sparse RP~\citep{li2006very} for medium-order tensors~(the high-order case cannot be handled with Gaussian or very sparse RP).

The average distortion ratios over 100 trials are reported as a function of the embedding dimension $k$ in Figure~\ref{figure:distortions}. In the small-order case, we see that $\fTT{R}$ and $\fCP{R}$ perform similarly to Gaussian RP for all values of the rank parameter. In the medium-order case, we see that the rank of the tensorized RP significantly affects the quality of the embedding. Moreover, $\fCP{R}$ struggles to achieve a good distortion ratio even when $R=100$ while $\fTT{R}$ almost reaches the performance of very sparse RP. This behavior is accentuated in the high-order case where $\fCP{R}$ obtains poor performances even for high values of $R$ and $k$ while $\fTT{R}$ provides good embeddings for $R=5,10$. Note that this behavior is expected from our theoretical analysis.%

To illustrate the time complexity of the algorithms, we report the average running time needed to project the input tensor for the medium-order case in Figure~\ref{figure:times}, when $\Xt$ is either given as a TT or a CP tensor of rank 10. We see that $\fTT{R}$~(resp. $\fCP{R}$) is more efficient when the input tensor is given in the TT format~(resp. CP format), which is somehow expected. We also report the average running time needed to project the different input tensor in medium-order case $(d = 3, N = 8, 11, 12, 13)$ with respect to the dimension $d^N$ (Appendix~\ref{app:xp-time}). It is also worth observing that $\fTT{R}$ is always faster than very sparse RP  while it is not the case for $\fCP{R}$.

\section{Conclusion}

We propose a novel efficient RP technique for high-order tensor data: tensorized random projections maps. We theoretically and empirically studied two tensorized maps relying on the CP and TT deocmposition format, respectively. Our theoretical analysis and simulation study show that the TT format is better suited than the CP format for tensorizing random projections. 

Future work include leveraging and extending our theoretical results to design efficient sketching algorithms for high-order tensor data. In particular, we plan to develop fast low rank approximation algorithms  for matrices given in the TT format, which could prove particularly useful for designing efficient PCA and CCA algorithms for high-dimensional tensor data.
\subsection*{Acknowledgment}
This research is supported by the Canadian Institute
for Advanced Research (CIFAR AI chair program). This work was completed while Beheshteh T. Rakhshan interned at Montreal Institute for Learning Algorithms (Mila), Montreal, QC. 
{
\bibliography{main.bib}
}

\clearpage
\appendix
\newpage
\onecolumn
\section*{\centering \LARGE Tensorized Random Projections \\ \vspace*{0.3cm}(Supplementary Material)}

\section{Proof of the Theorems for the CP case }\label{app:proof}
\subsection{Proof of Theorem~\ref{variance}: CP case}
%%%%%%%%%%lemma.CP.Expectedisometry%%%%%%%%%%
\begin{theorem*}
Let $\Xt\in\RR^{d_1\times d_2 \times \cdots \times d_N}$. The random projection maps $\fTT{R}$ and $\fCP{R}$~(see Definitions~\ref{TT.definition} and~\ref{CP.definition}) satisfy the following properties:

$\bullet~\Ex{\norm{\fCP{R}(\Xt)}_2^2} = \Ex{\norm{\fTT{R}(\Xt)}_2^2} = \norm{\Xt}_F^2,$

$\bullet~\Var{\norm{\fTT{R}(\Xt)}_2^2}\leq\frac{1}{k}(3\left(1+\frac{2}{R}\right)^{N-1}-1)\left\|{\Xt}\right\|_F^4,$

$\bullet~\Var{\norm{\fCP{R}(\Xt)}^2_2}\leq~\frac{1}{k}\left(3^{N-1}\left(1+\frac{2}{R}\right)-1\right)\nbr{\Xt}_F^4.$
\begin{proof}

\textbf{Expected isometry.} We start by showing that $\fCP{R}$ is an expected isometry, \ie that $\EE\left\|{\fCP{R}(\Xt)}\right\|_2^2 =\left\|{\Xt}\right\|_F^2$. Let $y_i =\langle\CP{\Ab_i^1,\Ab_i^2,\cdots,\Ab_i^N},\Xt\rangle$ and $\yb = [y_1,y_2,\cdots,y_k]$. With these definitions we have $\fCP{R}(\Xt) = \frac{1}{\sqrt{k}} \yb$ and it is thus sufficient to find $\EE[y_1^2]$. 
To lighten the notation, let $\Ab^n=\Ab^n_1$ for each $n\in [N]$ and let $\Tt = \CP{\Ab^1,\Ab^2,\cdots,\Ab^N}$. We have
\begin{align*}
\EE[y_1^2] 
&=
\EE[\langle \Tt,\Xt\rangle^2] 
=
\EE[\langle \Tt\otimes \Tt,\Xt\otimes \Xt\rangle]\\
&=
\langle \EE[\Tt\otimes \Tt],\Xt\otimes \Xt\rangle.
\end{align*}
Using the fact that the factor matrices $\Ab^n$ are independent, we have
\begin{align*}
\EE[\Tt\otimes \Tt]
&=
\EE[\CP{\Ab^1\otimes\Ab^1,\cdots,\Ab^N\otimes\Ab^N }] \\
&=
\CP{\EE[\Ab^1\otimes\Ab^1],\cdots,\EE[\Ab^N\otimes\Ab^N] }.
\end{align*}
Now,  for $n\in[N]$, since the entries of each factor matrix $\Ab^n$ are i.i.d. Gaussian random variables with mean $0$ and variance $(\frac{1}{R})^\frac{1}{N}$, we have
$$ \EE[\Ab^n\otimes\Ab^n]= \left(\frac{1}{R}\right)^\frac{1}{N} \vectorize(\Ib_{d_n})\circ\vectorize(\Ib_R).$$

One can then show that
\begin{align*}
\EE[ \Tt\otimes \Tt] 
&=
 \vectorize(\Ib_{d_1}) \circ \cdots\circ \vectorize(\Ib_{d_N}),
\end{align*}
which implies that
\begin{align*}
\EE[y_1^2] =
\langle \EE[\Tt\otimes \Tt],\Xt\otimes \Xt\rangle =
\norm{\Xt}^2_F,
\end{align*}
from which  $\EE\left\|{\fCP{R}(\Xt)}\right\|_2^2    =\left\|{\Xt}\right\|_F^2$  directly follows.

%%%%%%%%%lemma.CP.Vanishingvariance%%%%%%%%%%
\paragraph{Bound on the variance of $\fCP{R}$.}
Similar to TT case, in order to bound the variance of $\norm{\yb}_2^4$ we need to bound $\EE[\left\|{\yb}\right\|_2^4]$. We have
\begin{align*}
\EE[\left\|{\yb}\right\|_2^4]=\sum_{i=1}^k\EE[ y_i^4]+\sum_{i\neq j}\EE [y_i^2y_j^2].
\end{align*} 
Since $y_i$ and $y_j$ are independent whenever $i\neq j$ and $\EE[y_i^2]=\nbr{\Xt}_F^4$ for all $i$, the second summand is equal to $k(k-1)\nbr{\Xt}_F^4$. We now derive a bound on $\EE[y_1^4]$.
First define the tensor $\St^n$ of order $2(n-1)$  and shape $\underbrace{R\times R\dots\times R}_{n-1}\times d_1 \times d_2\cdots\times d_{n-1}$ for any $2\leq n < N$ by

$$
\St^n_{r_1,r_2,\cdots,r_{n-1},i_1,i_2,\cdots,i_{n-1}} = \sum_{r_n,\dots,r_N}\sum_{i_n,\cdots,i_N}(\Ab^n)_{i_nr_n}
(\Ab^{n+1})_{i_{n+1}r_{n+1}}\dots(\Ab^N)_{i_Nr_N}\It{r_1,\dots,r_N}\Xt_{i_1,\dots,i_N},
$$
where $\It \in (\RR^R)^{\otimes N}$ is the $N$th order identity tensor, i.e., $\It_{r_1,\dots,r_n} = 1$ if $r_1=\dots = r_n$ and $0$ otherwise.
In some sense, $\St^n$ is the tensor obtained by removing the first $n-1$ factor matrices from the computation of $y_1 =\langle\CP{\Ab^1,\Ab^2,\cdots,\Ab^N},\Xt\rangle$. With this definition one can check that
\begin{itemize}
    \item $\langle\CP{\Ab^1,\Ab^2,\cdots,\Ab^N},\Xt\rangle = \langle(\Ab^1)^\ts,\Sb^2\rangle,$
     \item $(\St^N_{(1,\dots,N-1)})^\ts = (\Xt_{(N)})^\ts \Ab^N \It_{(1)}$~(recall that $(\St^{N})_{(1,\dots,N-1)}\in\mathbb{R}^{R^{N-1} \times d_1\dots d_{N-1}}$ denotes the matricization of $\St^{N}$ obtained by mapping its first $N-1$ modes to rows and the other ones to columns).
    \item $\vectorize(\St^n) = \left((\St^{n+1})_{(1,2n)}\right)^\ts\vectorize(\Ab^n)$ for each $n \in [N-1].$
   \end{itemize}
 Using Lemma~\ref{Innerproduct} we obtain
 \begin{align*}
 \EE y_1^4 = \EE\langle\CP{\Ab^1,\Ab^2,\cdots,\Ab^N},\Xt\rangle^4 
 &= 
 \EE\langle\vectorize((\Ab^1)^\ts),\vectorize(\Sb^2)\rangle^4 
 = 
 3 R^{-\frac{2}{N}}\EE\left\|{\vectorize(\Sb^2)}\right\|_F^4\\
 &= 
 3 R^{-\frac{2}{N}}\EE\left\|{((\St^3)_{(1,4)})^\ts\vectorize(\Ab^2)}\right\|_F^4.
 \end{align*}
 Using successive applications of Lemma~\ref{Wishartidentity} it  follows that
\begin{align*}
\EE y_1^4  
&=
3 R^{-\frac{2}{N}}\EE\left\|{((\St^3)_{(1,4)})^\ts\vectorize(\Ab^2)}\right\|_F^4
\nonumber \\
&\leq
3^2 R^{-\frac{4}{N}} \EE\left\|(\St^3)_{(1,4)}\right\|_F^4
=
3^2 R^{-\frac{4}{N}} \EE\left\|{\vectorize(\St^3)}\right\|\label{eq13}
=
3^2 R^{-\frac{4}{N}} \EE\left\|{((\St^4)_{(1,6)})^\ts\vectorize(\Ab^3)}\right\|_F^4\nonumber\\
&\leq
3^3 R^{-\frac{6}{N}}
\EE\left\|(\St^4)_{(1,6)}\right\|_F^4
=
3^3 R^{-\frac{6}{N}}
\EE\left\|{\vectorize(\St^4)}\right\|_F^4\nonumber\\
&\leq
\dots\nonumber\\
&\leq
3^{N-1} R^{-\frac{2(N-1)}{N}}\EE\nbr{\vectorize(\St^N)}_F^4=3^{N-1} R^{-\frac{2(N-1)}{N}}\EE\nbr{(\St^N_{(1,\dots,N-1)})^\ts}_F^4\nonumber\\
&=
3^{N-1} R^{-\frac{2(N-1)}{N}}\EE\nbr{(\Xt_{(N)})^\ts \Ab^N \It_{(1)}}_F^4=3^{N-1} R^{-\frac{2(N-1)}{N}}\EE\left\|{(\Xt_{(N)})^\ts \Ab^N}\right\|_F^4\nonumber\\
&\leq
3^{N-1}R^{-2}R(R+2)\nbr{\Xt}_F^4\\
&=
3^{N-1}\left(1+\frac{2}{R}\right)\nbr{\Xt}_F^4\nonumber,
\end{align*}
where we used the equality $\|\Tt \It_{(1)}\|^2_F = \|\Tt\|^2_F$ for any tensor $\Tt$~(which follows from the fact that $\It_{(1)}(\It_{(1)})^\ts = \Ib$) for the penultimate equality.

Similar to proof of Theorem~\ref{variance} for $\fTT{R}$ map, we obtain
\begin{align*}
\EE\nbr{\yb}_2^4=\sum_{i=1}^k\EE y_i^4+\sum_{i\neq j}\EE y_i^2y_j^2
\leq k\left(3^{N-1}\left(1+\frac{2}{R}\right)\nbr{\Xt}_F^4\right)+k(k-1)\left\|{\Xt}\right\|_F^4.
\end{align*}
Finally,
\begin{align*}
\Var{\left\|{\fCP{R}(\Xt)}\right\|_2^2}
&=
\Var{\norm{\frac{1}{\sqrt{k}}\yb}_2^2}=~\frac{1}{k^2}\EE\left(\left\|{\yb}\right\|_2^4\right)-\frac{1}{k^2}\EE\left(\left\|{\yb}\right\|_2^2\right)^2=~\frac{1}{k^2}\EE\left\|{\yb}\right\|_2^4-\left\|{\Xt}\right\|_F^4\\
&\leq
\frac{1}{k^2}\left[k\left(3^{N-1}\left(1+\frac{2}{R}\right)\nbr{\Xt}_F^4\right)+k(k-1)\left\|{\Xt}\right\|_F^4\right]-\left\|{\Xt}\right\|_F^4\\
&\leq
\frac{1}{k}\left(3^{N-1}\left(1+\frac{2}{R}\right)-1\right)\nbr{\Xt}_F^4.
\end{align*}
\end{proof}
\end{theorem*}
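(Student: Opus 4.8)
The plan is to replay, for the CP map, the argument carried out above for $\fTT{R}$, replacing the linear chain of TT cores by the ``diagonal'' coupling of the CP factor matrices. Writing $y_i=\langle\CP{\Ab_i^1,\dots,\Ab_i^N},\Xt\rangle$ and $\yb=[y_1,\dots,y_k]$, so that $\fCP{R}(\Xt)=k^{-1/2}\yb$ with the $y_i$ i.i.d., the whole statement reduces to computing $\EE[y_1^2]$ and bounding $\EE[y_1^4]$ for a single random CP tensor $\Tt=\CP{\Ab^1,\dots,\Ab^N}$.

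\emph{Expected isometry.} I would write $\EE[y_1^2]=\langle\EE[\Tt\otimes\Tt],\Xt\otimes\Xt\rangle$, use $\Tt\otimes\Tt=\CP{\Ab^1\otimes\Ab^1,\dots,\Ab^N\otimes\Ab^N}$ together with the mutual independence of the $\Ab^n$ to push the expectation through the CP contraction, and compute $\EE[\Ab^n\otimes\Ab^n]=R^{-1/N}\,\vectorize(\Ib_{d_n})\circ\vectorize(\Ib_R)$ from the fact that the entries of $\Ab^n$ are i.i.d.\ $\mathcal N(0,R^{-1/N})$. Contracting the $N$ factors through the shared CP column index leaves only the $R$ ``diagonal'' columns; together with $(R^{-1/N})^N=R^{-1}$ this collapses $\EE[\Tt\otimes\Tt]$ to $\vectorize(\Ib_{d_1})\circ\cdots\circ\vectorize(\Ib_{d_N})$, and pairing against $\Xt\otimes\Xt$ gives $\EE[y_1^2]=\|\Xt\|_F^2$, hence $\EE\|\fCP{R}(\Xt)\|_2^2=\|\Xt\|_F^2$.

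\emph{Variance.} Expanding $\EE\|\yb\|_2^4=\sum_i\EE[y_i^4]+\sum_{i\ne j}\EE[y_i^2y_j^2]$ and using independence together with the isometry property makes the cross terms equal to $k(k-1)\|\Xt\|_F^4$, so it remains to bound $\EE[y_1^4]$. The key device, exactly as in the TT case, is to introduce ``partial contraction'' tensors $\St^n$ obtained by deleting the first $n-1$ factor matrices from the computation of $y_1$ (the shared rank indices being tied together by the $N$th-order identity tensor $\It$), and to set up the recursions $y_1=\langle(\Ab^1)^\ts,\St^2\rangle$, $\vectorize(\St^n)=\big((\St^{n+1})_{(1,2n)}\big)^\ts\vectorize(\Ab^n)$ for $n\in[N-1]$, and $(\St^N_{(1,\dots,N-1)})^\ts=(\Xt_{(N)})^\ts\Ab^N\It_{(1)}$. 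One application of Lemma~\ref{Innerproduct} peels off $\Ab^1$ at the cost of a factor $3R^{-2/N}$; then $N-1$ successive applications of Lemma~\ref{Wishartidentity} peel off $\Ab^2,\dots,\Ab^N$, using between consecutive steps that the Frobenius norm of a tensor does not depend on the chosen matricization, and using $\|\Tt\It_{(1)}\|_F=\|\Tt\|_F$ (which holds because $\It_{(1)}\It_{(1)}^\ts=\Ib_R$) at the final step to reduce to $\|\Xt\|_F^4$. Collecting the constants — $N-1$ factors $3$, $N$ factors $\sigma^4=R^{-2/N}$ (i.e.\ $R^{-2}$ overall), and one factor $R(R+2)$ from the last Wishart step — yields $\EE[y_1^4]\le 3^{N-1}(1+2/R)\|\Xt\|_F^4$, and substituting into $\Var{\|\fCP{R}(\Xt)\|_2^2}=k^{-2}\EE\|\yb\|_2^4-\|\Xt\|_F^4$ gives the claimed bound $\tfrac1k\big(3^{N-1}(1+2/R)-1\big)\|\Xt\|_F^4$.

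The main obstacle is the bookkeeping around the tensors $\St^n$: one has to choose which modes go to rows and which to columns so that each factor matrix $\Ab^n$ appears in precisely the form required by Lemma~\ref{Wishartidentity} (with the remaining, independent tensor playing the role of $\Bb$) and so that consecutive matricizations are compatible under the recursion. Compared with the TT case the extra complication is that the CP rank indices are all coupled through the diagonal tensor $\It$ rather than chained pairwise, which is why the final Wishart application needs the auxiliary identity $\|\Tt\It_{(1)}\|_F=\|\Tt\|_F$; once all these reshaping identities are verified, the remaining computations (the isometry contraction and the tracking of constants) are routine.
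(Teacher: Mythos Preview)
Your proposal is correct and follows essentially the same route as the paper's own proof: the same $\Tt\otimes\Tt$ computation for the isometry, the same partial-contraction tensors $\St^n$ tied together by the diagonal tensor $\It$, the same recursion peeled by one application of Lemma~\ref{Innerproduct} followed by $N-1$ applications of Lemma~\ref{Wishartidentity} (the intermediate ones with column dimension $1$ giving factors $3$, the last one with column dimension $R$ giving $R(R+2)$), and the same use of $\It_{(1)}\It_{(1)}^\ts=\Ib_R$ at the end. Your accounting of the constants and your identification of the reshaping bookkeeping as the only nontrivial point match the paper exactly.
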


\subsection{Proof of Theorem~\ref{JLpropertyCPTT}: CP case}
Theorem~\ref{JLpropertyCPTT} for the map $\fCP{R}$ directly follows from the following concentration bound.
\begin{theorem*}\label{concenterationboundCP}
Let $\Xt\in\RR^{d_1\times d_2 \times \cdots \times d_N}$. There exist absolute constants $C$ and $\widetilde{K}>0$ such that the  random projection map $\fCP{R}$~(see  Definition~\ref{CP.definition}) satisfies
\begin{align*}
\PP\left(\left|\left\|{\fCP{R}(\Xt)}\right\|_2^2 - \left\|{\Xt}\right\|_F^2\right| \geq \varepsilon\left\|{\Xt}\right\|_F^2\right)\leq C\exp\left[-C_1\frac{\left(\sqrt{k}\varepsilon\right)^\frac{1}{N}}{(3^{N-1}\widetilde{K})^{\frac{1}{2N}}(1+2/R)^\frac{1}{2N}}\right].
\end{align*}

%%%%%%%%%%proof.CP.Hypercontractivity%%%%%%%%%%
\begin{proof}
By CP part of Theorem~\ref{variance}, recall
$$\EE\norm{\fCP{R}(\Xt)}_2^2 = \norm{\Xt}_F^2,$$
and
$$\Var{\left\|{\fCP{R}(\Xt)}\right\|_2^2}\leq\frac{1}{k}\left(3^{N-1}\left(1+\frac{2}{R}\right)-1\right)\nbr{\Xt}_F^4.$$
Since $\left\|{\fCP{R}(\Xt)}\right\|_2^2$ is an order $2N$ polynomial of the entries of the matrices $\Ab^1_i,\cdots,\Ab^N_i$ for $i\in [k]$ we can apply Theorem~\ref{thm:hypercontractivity} to obtain
\begin{align*}
\PP\left(\left|\left\|{\fCP{R}(\Xt)}\right\|_2^2 - \left\|{\Xt}\right\|_F^2\right| \geq \lambda \right)
\leq 
C\exp\left[-~\left(\frac{\lambda^2}{\widetilde{K}\Var{\left\|{\fCP{R}(\Xt)}\right\|_2^2}}\right)^\frac{1}{2N}\right],
\end{align*}
where $C = e^2$ and $\widetilde{K}$ are absolute constants. Using the fact that $$\Var{\left\|{\fCP{R}(\Xt)}\right\|_2^2 }\leq \frac{3^{N-1}}{k}(1+2/R)\nbr{\Xt}_F^4,$$
and letting $\lambda = \varepsilon \left\|{\Xt}\right\|_F^2$ we obtain
\begin{align*}
\PP\left(\left|\left\|{\fCP{R}(\Xt)}\right\|_2^2 - \left\|{\Xt}\right\|_F^2\right|\geq \varepsilon \norm{\Xt}_F^2 \right)
&\leq
C\exp\left[-~\left(\frac{k\varepsilon^2\left\|{\Xt}\right\|_F^4}{\widetilde{K}3^{N-1}(1+2/R)\left\|{\Xt}\right\|_F^4}\right)^\frac{1}{2N}\right]\\
&\leq 
C\exp\left[-~\frac{\left(\sqrt{k}\varepsilon\right)^\frac{1}{N}}{(3^{N-1}\widetilde{K})^\frac{1}{2N}(1+2/R)^\frac{1}{2N}}\right].
\end{align*}
\end{proof}
\end{theorem*}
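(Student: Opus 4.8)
The plan is to derive this concentration bound as a direct corollary of the Hypercontractivity Concentration Inequality (Theorem~\ref{thm:hypercontractivity}), mirroring the argument already carried out for $\fTT{R}$ in Theorem~\ref{concenterationboundTT}. The two ingredients I would feed into it are the expected-isometry identity $\EE\norm{\fCP{R}(\Xt)}_2^2=\norm{\Xt}_F^2$ and the variance bound $\Var{\norm{\fCP{R}(\Xt)}_2^2}\le\frac{1}{k}(3^{N-1}(1+2/R)-1)\norm{\Xt}_F^4$, both of which are the content of the CP part of Theorem~\ref{variance}; in particular I would pass along the slightly weaker but cleaner upper bound $\frac{3^{N-1}}{k}(1+2/R)\norm{\Xt}_F^4$.

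The structural step is to recognize $\norm{\fCP{R}(\Xt)}_2^2$ as a low-degree polynomial of independent centered Gaussians. Writing $\norm{\fCP{R}(\Xt)}_2^2=\frac1k\sum_{i=1}^k y_i^2$ with $y_i=\langle\CP{\Ab_i^1,\dots,\Ab_i^N},\Xt\rangle$, each $y_i$ contracts exactly one index against each of the $N$ factor matrices $\Ab_i^1,\dots,\Ab_i^N$, so it is homogeneous of degree $N$ in their entries; hence $y_i^2$, and the whole sum, is a polynomial of degree $2N$. By Definition~\ref{CP.definition} all the entries of all the $\Ab_i^n$, across $i\in[k]$ and $n\in[N]$, are independent centered Gaussian variables, so Theorem~\ref{thm:hypercontractivity} applies with $q=2N$.

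Applying that theorem to $f=\norm{\fCP{R}(\Xt)}_2^2$ with deviation $\lambda$ gives $\PP(|\,\norm{\fCP{R}(\Xt)}_2^2-\norm{\Xt}_F^2\,|\ge\lambda)\le e^2\exp[-(\lambda^2/(\widetilde K\,\Var{\norm{\fCP{R}(\Xt)}_2^2}))^{1/(2N)}]$. Then I would set $\lambda=\varepsilon\norm{\Xt}_F^2$ and substitute the variance upper bound; the $\norm{\Xt}_F^4$ factors cancel inside the exponent, leaving $\left(\frac{k\varepsilon^2}{\widetilde K\,3^{N-1}(1+2/R)}\right)^{1/(2N)}$. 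Finally, rewriting $(k\varepsilon^2)^{1/(2N)}=(\sqrt k\,\varepsilon)^{1/N}$ and pulling the remaining constants out of the root yields the stated bound, with $C=e^2$ and $C_1$ an absolute constant.

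I do not expect a genuine obstacle here: the analytic work sits entirely in the variance computation of Theorem~\ref{variance} and in the black-box inequality of Theorem~\ref{thm:hypercontractivity}, so this proof is mostly bookkeeping. The one spot that needs care is the polynomial/independence accounting in the second paragraph — one must confirm that the degree is indeed $q=2N$ and that the shared rank index $R$ creates no dependence between the $N$ factor matrices, both of which follow immediately from Definition~\ref{CP.definition}. A minor contrast with the TT case is worth flagging: there the exponent simplification required the estimate $(1+2/R)^{(N-1)/(2N)}\le\sqrt{1+2/R}$, whereas here the factor $(1+2/R)^{1/(2N)}$ appearing in the statement is exactly what the substitution produces, so no additional inequality is needed.
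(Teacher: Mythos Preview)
Your proposal is correct and follows essentially the same approach as the paper: invoke the expected-isometry and variance bound from Theorem~\ref{variance}, observe that $\norm{\fCP{R}(\Xt)}_2^2$ is a degree-$2N$ polynomial in the independent Gaussian entries of the $\Ab_i^n$, apply Theorem~\ref{thm:hypercontractivity}, then substitute $\lambda=\varepsilon\norm{\Xt}_F^2$ and the simplified variance bound $\frac{3^{N-1}}{k}(1+2/R)\norm{\Xt}_F^4$. Your additional remark that, unlike the TT case, no extra inequality is needed because $(1+2/R)^{1/(2N)}$ already matches the statement is accurate.
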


\newpage
\section{Additional Experimental Results}\label{app:xp}
\subsection{Pairwise Distance Estimation}\label{app:xp-cifar}
\begin{figure*}[th!]
\begin{center}
\includegraphics[width=\textwidth]{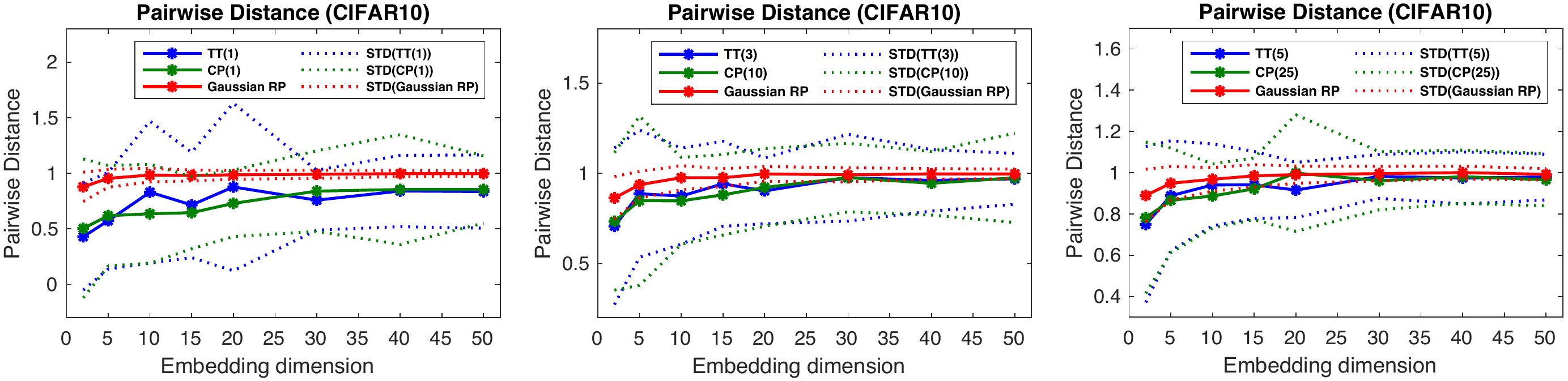}
 \caption{Comparison of tensorized ranodm projections with Gaussian random projections on CIFAR-10 data for different values of the rank parameter: (left) rank 1, (middle) rank 3-10, (right) rank 5-25.}
\label{figure:pairwise}
\end{center}
\end{figure*}
We compare the tensorized projection maps $\fTT{R}$ and $\fCP{R}$ with classical Gaussian RP on \text{CIFAR-10} image data for different values of the rank parameter $R$.   We reshape  the first n=50 vectors~(of size $32\times 32 \times 4$) of \text{CIFAR-10} to $4\times 4 \times 4 \times 4 \times 4\times 3$ tensors, normalize them and compare the pairwise distance $\frac{1}{n(n-1)} \sum_{1\leq i\neq j\leq n} \frac{\norm{f(\xb_i)-f(\xb_j)}_2}{\norm{\xb_i-\xb_j}_2}$ and standard deviation for different projection sizes $k$ over 100 trials. The results are reported in Figure~\ref{figure:pairwise} where we see that tensorized random projection maps perform competitively with classical Gaussian random projections.

\subsection{Time Evaluation}\label{app:xp-time}
\begin{figure*}[th!]
\begin{center}
\includegraphics[scale=0.7]{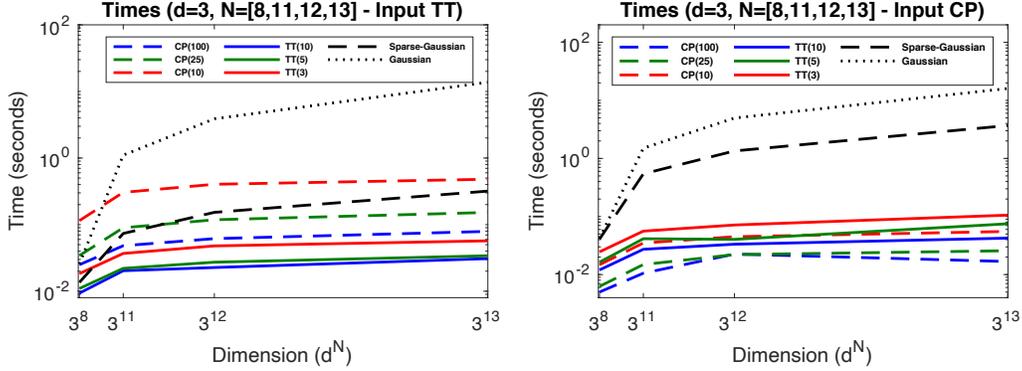}
 \caption{Comparison of embedding time  between tensorized, Gaussian and very  sparse Gaussian  RP  for  the  medium-order case with different number of modes ($d = 3, N \in \{8, 11, 12, 13\}$) when the input is given in the TT format (left) or CP format (right).}
\label{figure:timeevaluation}
\end{center}
\end{figure*}
We report the average running time with respect to the input dimension $d^N$ for the medium-order case with different number of modes $(d = 3, N \in \{8, 11, 12,13\})$ in Figure~\ref{figure:timeevaluation}, when the input tensor $\Xt$ is either as a TT or CP tensor of rank 10. We can see that $\fTT{R}$ is more efficient when the input is in TT format. However, $\fCP{R}$ performs better when the input is in the CP format~(though the computational gain of $\fCP{R}$ in this case is considerably smaller than the one of $\fTT{R}$ in the previous case). We can see that by increasing the dimension $\fTT{R}$ performs close to $\fCP{R}$ even when the input is in CP and it is faster than classical Gaussian RPs in both cases~(which is not true for $\fCP{100})$. 
% Moreover, by increasing the rank $R$, $\fTT{5}$(resp. $\fCP{5}$) and $\fTT{10}$ (resp. $\fCP{5}$) are very close when the input is in TT (resp. for input in CP). 

%

%

\end{document}